\theoremstyle{plain}
\newtheorem{theorem}{Theorem}[section]
\newtheorem{lemma}[theorem]{Lemma}
\theoremstyle{definition}
\newtheorem{definition}[theorem]{Definition}
\theoremstyle{remark}
\definecolor{codegreen}{rgb}{0,0.6,0}
\definecolor{codegray}{rgb}{0.5,0.5,0.5}
\definecolor{codepurple}{rgb}{0.58,0,0.82}
\definecolor{backcolour}{rgb}{0.95,0.95,0.92}
\lstdefinestyle{mystyle}{
    backgroundcolor=\color{backcolour},   
    commentstyle=\color{codegreen},
    keywordstyle=\color{magenta},
    numberstyle=\tiny\color{codegray},
    stringstyle=\color{codepurple},
    basicstyle=\ttfamily\footnotesize,
    breakatwhitespace=false,         
    breaklines=true,                 
    captionpos=b,                    
    keepspaces=true,                 
    numbers=left,                    
    numbersep=5pt,                  
    showspaces=false,                
    showstringspaces=false,
    showtabs=false,                  
    tabsize=2
}
\newcommand{\edo}[1]{{\color{blue}#1}}
\definecolor{bostonuniversityred}{rgb}{0.8, 0.0, 0.0}
\definecolor{ao}{rgb}{0.0, 0.5, 0.0}
\definecolor{bondiblue}{rgb}{0.0, 0.6235294117647059, 0.6941176470588235}
\definecolor{pantonegreen}{rgb}{0.0, 0.6627450980392157, 0.2784313725490196}
\definecolor{oceanblueboat}{rgb}{0.0, 0.4823529411764706, 0.7686274509803922}
\newcommand*{\rom}[1]{\expandafter\@slowromancap\romannumeral #1@}
\newcommand{\editornote}[2]{{\color[rgb]{#1}#2}}
\newcommand{\NDY}[1]{\editornote{0.0,0.8,0.4}{[NdY: #1]}}
\newcommand{\TODO}[1]{\editornote{1.0,0.0,0.0}{[TODO: #1]}}
\newcommand{\todo}[1]{\editornote{1.0,0.0,0.0}{[TODO: #1]}}
\renewcommand{\TODO}[1]{}
\renewcommand{\todo}[1]{}
\renewcommand{\NDY}[1]{}
\renewcommand{\edo}[1]{}
\newcommand{\from}{\colon} %
\def\R{{\mathbb{R}}}
\let\oldPr\Pr
\renewcommand{\Pr}{\oldPr\nolimits}
\newcommand{\E}{\mathbb{E}}
\newcommand{\transp}[1]{#1^{\!\top}\!}
\DeclareMathOperator*{\argmax}{arg\,max}
\newcommand{\deq}{\mathrel{\mathop{:}}=}
\def\d{\operatorname{d}\!{}}
\newcommand{\calB}{\mathcal{B}}
\DeclareMathOperator{\Id}{Id}
\newcommand{\norm}[1]{\left\lVert#1\right\rVert}
\newcommand{\1}{\mathbbm{1}}
\title{Finer Behavioral Foundation Models via Auto-Regressive Features
and Advantage Weighting}
\author{Edoardo Cetin, Ahmed Touati, Yann Ollivier}
\begin{document}

\maketitle

\begin{abstract}

The forward-backward representation (FB) is a recently proposed framework
\citep{fb_0shot,fb_v1} to train \emph{behavior foundation models} (BFMs)
that aim at providing zero-shot efficient policies
for any new task specified in a given reinforcement learning (RL)
environment, without training for each new task. Here we address two core
limitations of FB model training.

First, FB, like all successor-feature-based methods, relies on a
\emph{linear} encoding of tasks: at test time, each new reward function
is linearly projected onto a fixed set of pre-trained features. This
limits expressivity as well as precision of the task representation.  We
break the linearity limitation by introducing \emph{auto-regressive
features} for FB, which let fine-grained task features depend
on coarser-grained task information.  This can represent arbitrary
nonlinear task encodings, thus significantly increasing expressivity of
the FB framework.

Second, it is well-known that training RL agents from offline
datasets often requires specific techniques.%
We show that FB works well together with such offline RL
techniques, by adapting techniques from~\citep{nair2020awac, asac} for FB.
This is necessary to get non-flatlining performance in some datasets,
such as DMC Humanoid.

As a result, we produce efficient FB BFMs for a number of new
environments%
.  Notably, in the D4RL
locomotion benchmark, the generic FB agent matches the performance of
standard single-task offline agents (IQL, XQL).  In many setups, the
offline techniques are needed to get any decent performance at all. The
auto-regressive features have a positive but moderate impact,
concentrated on
tasks requiring spatial precision
and task generalization beyond the behaviors represented in the
trainset.

Together, these results establish that generic, reward-free FB BFMs
can be competitive with single-task agents on standard benchmarks, while
suggesting that expressivity of the BFM is not a
key limiting factor in the environments tested.
\end{abstract}

\section{Introduction}

The forward-backward representation (FB) is a recently proposed framework
\citep{fb_0shot,fb_v1} to train behavior foundation models (BFMs) from
offline data. BFMs promise to provide zero-shot efficient policies for
any new task specified in a given reinforcement learning (RL)
environment, beyond the tasks and behaviors in the training set.  This
contrasts with traditional offline RL and imitation learning approaches,
which are trained to accomplish individual target tasks, with no
mechanism to tackle new tasks without repeating the full training
procedure.

The FB approach strives to learn an agent that recovers many possible
behaviors in a given environment, based on learning \emph{successor
measure} representations, without any reward signal. After training, an
FB agent can be prompted via several kinds of task description: an
explicit reward function, a goal state, or even a single demonstration
\citep{fb_imitation}.

However, in its current formulation, FB has been shown effective only for toy problems and relatively simple
locomotion tasks and when trained on undirected
datasets collected
via unsupervised exploration \citep{RND}.

Here, we tackle two core limitations of the ``vanilla'' FB framework,
namely, the difficulty to learn from complex offline datasets, and the
linear correspondence between tasks and features. As a result, we can build
high-performing FB BFMs for a series of new environments. Our main
contributions are the following:

\begin{itemize}
\item  We show that the vanilla FB policy optimization leads to poor
performance when learning from datasets made of a few near-optimal
examples for a few specific tasks. This failure is exactly analogous to
naively using online RL algorithms in the offline setting, a well-studied
problem \citep{bcq, brac, offline_rl_review}. 
This explains the poor performance of vanilla FB on the \emph{D4RL benchmark}, as reported in recent unsupervised RL works~\citep{parkfoundation, frans2024unsupervised}.

\item Accordingly, we introduce a new policy optimization step for FB, to
improve learning from offline datasets demonstrating complex behaviors.
In particular, we integrate an improved version of
\emph{advantage-weighted regression}~\citep{awac}, together with recent
advancements from the offline RL literature~\citep{asac} and additional
algorithmic refinements to FB (Section~\ref{sec:aw}).

We show these changes are crucial to train FB with common offline
datasets beyond pure RND exploration and scale to more challenging
environments, often making the difference between near-zero and
satisfactory performance (Section~\ref{sec:experiments}). %

\item We overcome a core theoretical limitation of FB and, more
generally, of all \emph{successor features}
frameworks~\citep{barreto2017successor, borsa2018universal}: their linear
correspondence between reward functions and task representation vectors.
Indeed, in these frameworks, at test time, the reward function is
linearly projected onto a fixed set of pre-trained features. This results
in ``reward blurring'' and limits spatial precision in the task
representation~\citep{fb_v1}.

We introduce a new \emph{auto-regressive} encoding of task features
(Section~\ref{sec:ar}), that breaks the linearity constraint by letting
fine-grained task features depend on coarser-grained task information.
This allows for universal approximation of any arbitrary task space
(Appendix, Theorem~\ref{thm:univapprox}).

We show that auto-regressive features make a moderate but systematic
difference when learning \emph{new} test tasks far from ones considered
to build the datasets, or for tasks requiring precise goal-reaching (eg,
$15\%$ relative increase for goal-reaching in the Jaco arm environment).

\item
With
these improvements, we show that advantage-weighted
autoregressive FB (FB-AWARE) extends FB performance to new environments
such as Humanoid and the locomotion environments in the canonical \emph{D4RL
benchmark}~\citep{d4rl}. On the latter, FB-AWARE matches the performance
of standard offline RL agents trained on a single task with full access
to rewards (Section~\ref{sec:d4rl}), further vindicating the use of behavior
foundation models for zero-shot RL.

\end{itemize}

\section{Preliminaries
: Notation, Forward-Backward Framework for
Behavioral Foundation Models
}

\paragraph{Markov decision process, notation.}
We consider a reward-free Markov decision process (MDP) $\mathcal{M}=(S,A,P,\gamma)$ 
with state space $S$, action space $A$, transition probabilities
$P(s'|s,a)$ from state $s$ to $s'$ given action $a$,
and discount factor $0 < \gamma < 1$ \citep{sutton2018reinforcement}. 
A policy $\pi$ is a function $\pi\from S\to \mathrm{Prob}(A)$ mapping a
state $s$ to the probabilities of actions in $A$.
Given
$(s_0,a_0)\in S\times A$ and a policy $\pi$,
we denote $\Pr(\cdot|s_0,a_0,\pi)$ and $\E[\cdot|s_0,a_0,\pi]$ the
probabilities and expectations under state-action sequences $(s_t,a_t)_{t
\geq 0}$ starting at $(s_0,a_0)$ and following policy $\pi$ in the
environment, defined by sampling $s_t\sim P( s_t|s_{t-1},a_{t-1})$ and
$a_t\sim \pi( a_t | s_t)$.
Given any reward
function $r\from S \to \R$, the $Q$-function of $\pi$ for $r$ is
$Q_r^\pi(s_0,a_0)\deq \sum_{t\geq 0} \gamma^t \E
[r(s_t)|s_0,a_0,\pi]$. The \emph{value function} of $\pi$ for $r$ is
$V_r^\pi(s)\deq \E_{a\sim \pi(s)} Q_r^\pi(s,a)$, and the
\emph{advantage function} is
is $A_r^\pi(s,a)\deq Q_r^\pi(s,a)-V_r^\pi(s)$.

We assume access to a dataset consisting of \emph{reward-free} observed
transitions $(s_t,a_t,s_{t+1})$ in the environment. We denote by $\rho$
the distribution of states $s_{t+1}$ in the training set.

\paragraph{Behavioral foundation models, zero-shot RL.} A
\emph{behavioral foundation model} for a given reward-free MDP, is an
agent that can produce approximately optimal policies for any reward
function $r$ specified at test time in the environment, without performing
additional learning or fine-tuning for each new reward function. An early
example of such a model includes universal successor features (SFs)~\citep{borsa2018universal}, which depend on a set of (sometimes handcrafted) features:
at test time, the reward is linearly projected onto the features, and a
pre-trained policy is applied. Forward-backward representations (defined
below) are another one, mathematically related to SFs.
\cite{fb_0shot} compares a number of variants of SFs and FB on a number
of empirical problems. \NDY{this is like a mini-related work, but it fits
well here}

\paragraph{The forward-backward framework.} The FB framework
\citep{fb_v1,fb_0shot} is a theoretically and empirically well-supported
way 
to train BFMs,
based on
learning an efficient representation of the \emph{successor measures}
$M^{\pi}$ for various policies $\pi$. For each state-action $(s_0,a_0)\in
S\times A$, this is a measure over states, describing the distribution of
future
states visited by starting at $(s_0,a_0)$ and following policy $\pi$. It
is defined as
\begin{equation}
M^\pi(s_0,a_0,X)\deq \sum_{t\geq 1} \gamma^t \Pr(s_t\in X \mid
s_0,a_0,\pi)
\end{equation}
for any subset $X\subset S$. $M^\pi$ satisfies a measure-valued Bellman
equation \citep{successorstates}, which can be used to learn
approximate parametric models of $M$.

\cite{fb_v1} propose to learn a finite-rank parametric model of $M$, as
follows:
\begin{equation}
\label{eq:FBmodel}
M^{\pi_z}(s_0,a_0,X)\approx \int_{s\in X}
\transp{F(s_0,a_0,z)}B(s)\rho(\d s)
\end{equation}
where $\rho$ is the data distribution, where $F$ and $B$ take values in
$\R^d$, where $z\in \R^d$ is a task encoding vector, and where
\begin{equation}
\label{eq:piz}
\pi_z(s)=\argmax_a \transp{F(s,a,z)}z
\end{equation}
is a parametric policy depending on $z$. $F$, $B$ and $\pi_z$ are learned
at train time.
At test time, given a reward
function $r$, one estimates the task representation vector
\begin{equation}
\label{eq:FBz}
z=\E_{s\sim \rho} [r(s)B(s)]
\end{equation}
and then the policy $\pi_z$ is applied.

The main result of \cite{fb_v1} is that when \eqref{eq:FBmodel}--\eqref{eq:piz}
hold, then for \emph{any} reward function $r$, the policy $\pi_z$ so
obtained is optimal. At test time, reward functions for FB may also be specified
through an expert demonstration \citep{fb_imitation}.

The full algorithm for FB training is provided in Algo.~\ref{alg:fb}
(Appendix~\ref{app: algos}).

\section{Breaking Some Key Limitations of the
Forward-Backward
Framework
}

\subsection{Auto-Regressive Features for Non-Linear Task Encoding} %
\label{sec:ar}

\paragraph{Intuition for auto-regressive FB: nonlinear task encoding.}
Forward-backward (FB) representations and their predecessor, universal
successor features (SFs), attempt to solve zero-shot RL by linearly
projecting new tasks (reward functions $r$) onto a set of features
$B\from S\to \R^d$. At test time, when facing a new reward function $r$,
a task encoding $z\in \R^d$ is computed by
$z=\E [r(s)B(s)]$ (FB) or $z=(\E [\phi(s)\transp{\phi(s)})^{-1} \E
[r(s)\phi(s)]$ (SFs). Then a pretrained policy $\pi_z$ is applied.

FB aims at learning the features $B$ that minimize the error from this
process: $B$ is obtained by a finite-rank approximation of the operator
that sends a reward $r$ to its $Q$-function. Bringing the FB loss to $0$
(which requires infinitely many features) guarantees successful zero-shot
RL for any reward function $r$. Theoretically, the features $B$ in FB
``most linearize'' the computation of $Q$-functions, and empirically this
brings better performance than other feature choices \citep{fb_0shot}.

Still, even with the best features $B$, the task encoding $z$ is
\emph{linear}, because $z=\E [r(s)B(s)]$ is linear in $r$: tasks are
identified by the size-$d$ vector of their correlations with a fixed set
of $d$ pre-trained features $B$.

The standard FB framework learns a rank $d$ approximation
by focusing on the main eigenvectors of the environment dynamics
\citep{successorstates}. Projecting the reward onto these eigenvectors
can remove spatial precision, creating short-term reward blurring
\citep{fb_v1}.

This is clearly suboptimal. Intuitively, if we first acquire information
that the rewards are located in the top-left corner of $S$, we would like
to use more precise features located in the top-left corner to better
identify the reward function there.

\emph{Auto-regressive features} make this possible, while still keeping
most of the theoretical properties of plain FB. The idea is to compute the
task encoding $z=\E[r(s)B(s)]$ progressively, and let the
later-computed features $B$ depend on the early components of $z$. We
decompose $z$ and $B$ into $K$ blocks $z=(z_1,z_2,\ldots,z_K)$ and
$B=(B_1,B_2,\ldots,B_K)$. We first
compute $z_1=\E[r(s)B_1(s)]$ as in plain FB.
But then we compute
$z_2=\E[r(s)B_2(s,z_1)]$ where the second block of features $B_2$ is
allowed to depend on $z_1$, thus conditioning the features on the task
information provided by $z_1$. This can be iterated:
$z_3=\E[r(s)B_3(s,z_1,z_2)]$, etc. The resulting vector
$=(z_1,z_2,\ldots,z_K)$ encodes the task in an auto-regressive manner,
where the meaning of $z_i$ depends on $z_{1:i-1}$.

Intuitively, $z_1$ provides a ``coarse'' task encoding by linear
features. Then we compute a further, finer task encoding
$z_2$ by computing the correlation of $r$ with features $B_2$ that depend
on the coarse task encoding $z_1$. Hopefully the
features $B_2$ can become more specialized and provide a better task
encoding.

In practice, the main change with respect to plain FB training is that
$B$ depends on $z$. 
We now represent the successor measures $M^{\pi_z}$ by
$\transp{F(z)}B(z)$, instead of simply $\transp{F(z)}B$ which shares the same
$B$ for all policies. This allows for a better fit of the FB model.
This
is formalized in the next section and in Appendix~\ref{sec:theory}.

Contrary to plain FB, the task encoding $r\mapsto z$ becomes fully
nonlinear: the set of tasks $r$ represented exactly becomes a nonlinear
submanifold of all possible tasks, instead of a $d$-dimensional subspace.
Even with just two levels of features, this model is able to represent an
arbitrary nonlinear mapping between reward functions $r$ and task
representations $z$ (Appendix, Theorem~\ref{thm:univapprox}), instead of
just linearly projecting the reward onto a fixed basis of features. This
greatly extends the theoretical expressivity of the FB and successor feature
frameworks.

This model also encodes a hierarchical prior on tasks, favoring tasks
that can be described through a cascade of more and more specialized
task features.

\paragraph{FB with auto-regressive task encoding: formal description.} 
Auto-regressive features extend plain FB by letting $B$ depend on $z$. In ordinary
FB, this would be problematic, since the task encoding $z=\E[r(s)B(s)]$ used
at test time becomes a fixed point equation if $B$ depends on $z$.
However, this fixed point equation can be handled easily if $B$ has a
hierarchical or auto-regressive structure.

\begin{definition}
A feature map $B\from S\times \R^d \to \R^d$ is called
\emph{auto-regressive} if, for any $1\leq i \leq d$ and any $(s,z)\in
S\times\R^d$, the
$i$-th component of $B(s,z)$ only depends on $(z_1,\ldots,z_{i-1})$ and
not on $(z_i,\ldots,z_d)$.
\end{definition}

For such models, we can easily compute fixed-points values of the type
$z=B(s,z)$, by first computing the component $B_1$ of the output (which
does not dependent on $z$), which determines $z_1$, which allows us to
compute the component $B_2$ of the output, which determines $z_2$, etc.

In practice, auto-regressive models $B(s,z)$ can be built by splitting
both the representation vector $z\in \R^d$ and the output $B(s,z)\in
\R^d$ into $k$ ``auto-regressive groups'' of dimension $d/k$.  The first
group $B_1(s,z)$ in the output of $B$ is actually independent of $z$, and
the $i$-th group $B_i(s,z)$ of the output of $B$ only takes as inputs the
previous groups $z_1,\ldots,z_{i-1}$ of $z$. At test time, this allows us
to compute the fixed point $z=\E_{s\sim \rho}[r(s) B(s,z)]$ by first
estimating the first group, $z_1=\E_{s\sim \rho} [r(s) B_1(s)]$ similar
to plain FB. Then the other groups are computed iteratively:
$z_{i+1}=\E_{s\sim \rho} [r(s) B_{i+1}(s,z_1,\ldots,z_i)]$. In the
experiments, we focus on $k=4$ or $k=8$ auto-regressive groups.

\begin{figure}
\begin{center}
\includegraphics[width=.35\textwidth]{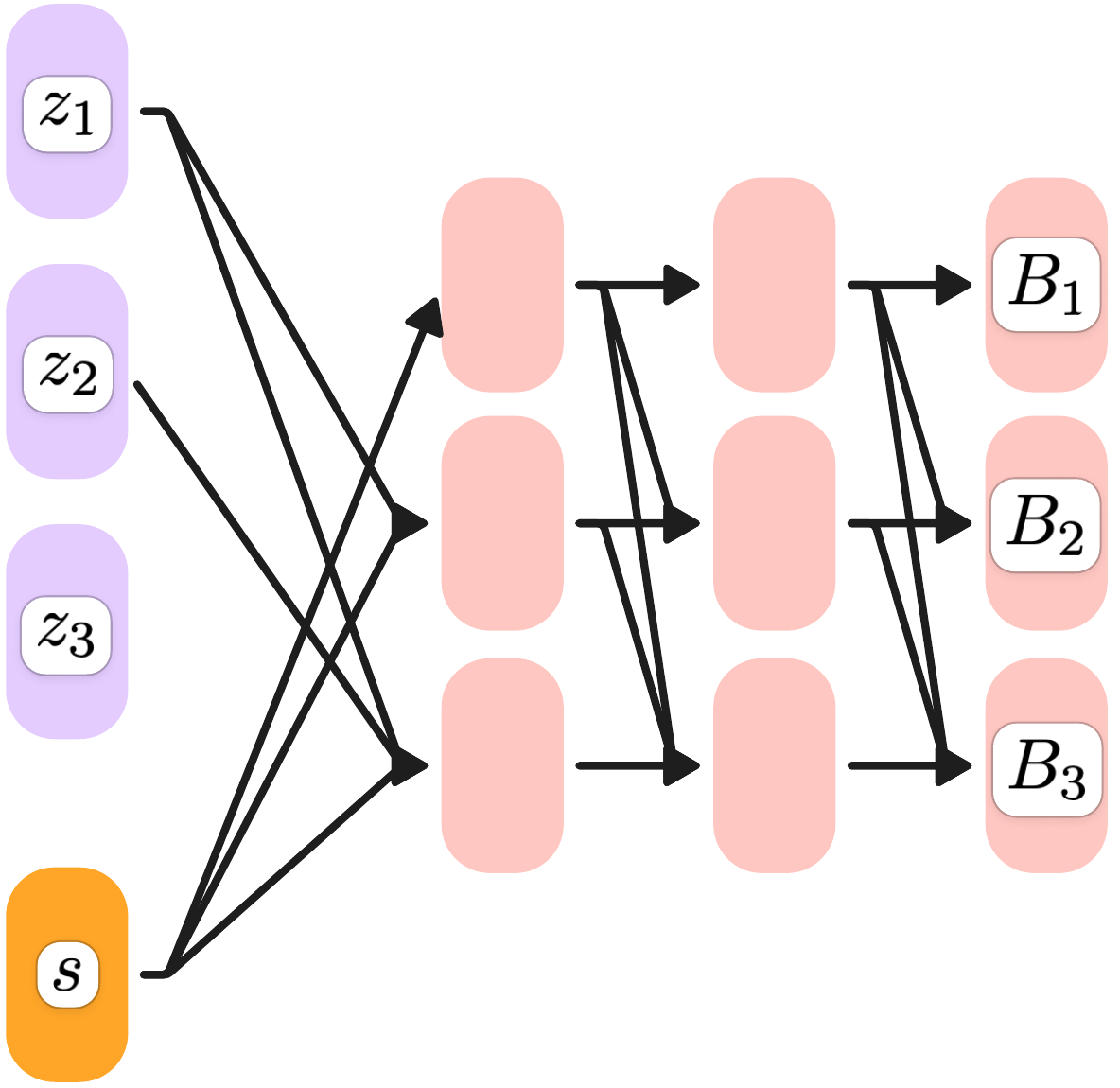}
\end{center}
\caption{An auto-regressive architecture for $B(s,z)$. The $i$-th block
of the output $B$ only depends on blocks $z_1,\ldots,z_{i-1}$ of the input
$z$.
In each layer, the weights from each block to the lower-ranking blocks
of the next layer have been removed. The state $s$ is still fed to every
block on the input layer.
\label{fig:Bschema}
}
\end{figure}

We employ a network architecture (Fig.~\ref{fig:Bschema}) in which each layer of $B_i$ has access
to the previous layers of all previous blocks $B_{1\ldots i}$: this
ensures good expressivity while preserving the auto-regressive property.
This allows for efficient evaluation: this is implemented as masks
on fully-connected layers for the full model $B$.

\bigskip

The following result extends the theorem from
\citet{fb_v1} for vanilla FB, to allow $B$ to depend on $z$.

\begin{theorem}
\label{thm:arfb}
Assume we have learned representations $F\from S\times A\times \R^d \to
\R^d$ and $B\from S\times \R^d\to \R^d$, as well as a parametric family
of policies $\pi_z$ depending on $z\in \R^d$, satisfying
\begin{equation}
\label{eq:FB}
\begin{cases}
 M^{\pi_z}(s_0,a_0,X) = \int_X \transp{F(s_0,a_0,z)}B(s,z) \,\rho(\d
 s), \qquad \forall s_0\in S, \, a_0\in A, X\subset S, z\in \R^d
\\
\pi_z(s) = \argmax_a \transp{F(s,a,z)}z,
\qquad\qquad\qquad\qquad \forall (s,a)\in S\times A, \, z\in \R^d.
\end{cases}
\end{equation}

Then the following holds. For any reward function $r$, if we can find a
value $z_r\in \R^d$ such that
\begin{equation}
\label{eq:z}
z_r=\E_{s\sim \rho}[r(s) B(s,z_r)]
\end{equation}
then $\pi_{z_r}$ is an optimal policy for reward $r$, and the optimal
$Q$-function is $Q^\star_r(s,a)=\transp{F(s,a,{z_r})}z_r$.

Moreover, if $B$ is auto-regressive, then the fixed point \eqref{eq:z}
always exists, and can be
computed directly, by iteratively computing each component
$z_i=\E[r(s)B_i(s,z_1,\ldots,z_{i-1})]$ for $i=1,\ldots,d$.  

\end{theorem}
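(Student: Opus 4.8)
The plan is to mirror the proof of the vanilla-FB theorem in \citet{fb_v1}, the only structural change being that $B$ --- and hence the successor-measure factorization --- now depends on $z$, so the task vector is produced as a fixed point of \eqref{eq:z} rather than by a single linear projection. The substance of the argument is to check that this fixed point plays, in the new setting, the exact role that the linear projection $z=\E[r(s)B(s)]$ played originally.

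For the first statement, I fix a reward $r$ and a solution $z_r$ of \eqref{eq:z} and integrate the measure identity in the first line of \eqref{eq:FB} (taken at $z=z_r$) against $r$:
\[
\int_S r(s)\,M^{\pi_{z_r}}(s_0,a_0,\d s)
= \transp{F(s_0,a_0,z_r)}\!\int_S r(s)\,B(s,z_r)\,\rho(\d s)
= \transp{F(s_0,a_0,z_r)}z_r,
\]
where the last equality is exactly \eqref{eq:z}. By definition of the successor measure, the left-hand side is the $Q$-function $Q^{\pi_{z_r}}_r(s_0,a_0)$ of $\pi_{z_r}$ for reward $r$ (with the conventions of \citet{fb_v1}; the only subtlety, the $t\ge 0$ versus $t\ge 1$ summation range, contributes an additive term $r(s_0)$ independent of $a_0$ and is immaterial below). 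Hence $\transp{F(s,a,z_r)}z_r = Q^{\pi_{z_r}}_r(s,a)$ for all $(s,a)$. This is the step at which everything hinges on the fixed point existing: without \eqref{eq:z}, the function $(s,a)\mapsto \transp{F(s,a,z_r)}z_r$ need not coincide with any genuine $Q$-function.

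Next I close the loop with a greedy-policy argument. The second line of \eqref{eq:FB} reads $\pi_{z_r}(s)=\argmax_a \transp{F(s,a,z_r)}z_r$, which by the identity just established says that $\pi_{z_r}$ is greedy with respect to its own $Q$-function $Q^{\pi_{z_r}}_r$. A self-consistent greedy policy is optimal: $Q^{\pi_{z_r}}_r$ then satisfies the Bellman optimality equation, hence equals $Q^\star_r$, and $\pi_{z_r}$ attains it --- this is the standard policy-improvement argument, already carried out in the continuous state-action setting in \citet{fb_v1}, so I would cite it rather than reproduce it. Combining the two displays yields both conclusions: $\pi_{z_r}$ is optimal for $r$ and $Q^\star_r(s,a)=\transp{F(s,a,z_r)}z_r$.

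For the final statement I would simply unwind the triangular structure of an auto-regressive $B$. Writing $\Phi(z)\deq \E_{s\sim\rho}[r(s)\,B(s,z)]$, auto-regressivity means the $i$-th coordinate $\Phi(z)_i=\E_{s\sim\rho}[r(s)\,B_i(s,z_1,\dots,z_{i-1})]$ depends only on $z_1,\dots,z_{i-1}$. One therefore defines $z_1\deq \E_{s\sim\rho}[r(s)B_1(s)]$ (well posed, with no $z$-dependence), then $z_2\deq \E_{s\sim\rho}[r(s)B_2(s,z_1)]$, and inductively $z_i\deq \E_{s\sim\rho}[r(s)B_i(s,z_1,\dots,z_{i-1})]$; the resulting vector $z_r=(z_1,\dots,z_d)$ solves $\Phi(z)=z$ by construction (and is in fact the unique solution, since each coordinate is forced). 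The only genuine hypothesis needed here is integrability --- each expectation must be finite, which holds, e.g., whenever $r$ is bounded and the $B_i$ are $\rho$-integrable --- which I would flag as a standing assumption. The main obstacle throughout is thus not a hard computation but careful bookkeeping: pinning down the successor-measure-to-$Q$ identity under the paper's conventions, and invoking the greedy-implies-optimal step at the right level of generality, both of which are inherited essentially verbatim from \citet{fb_v1}.
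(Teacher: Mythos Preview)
Your proposal is correct and follows essentially the same route as the paper: integrate the successor-measure factorization at $z=z_r$ against $r$, invoke the fixed-point equation to identify $\transp{F(\cdot,\cdot,z_r)}z_r$ with $Q^{\pi_{z_r}}_r$, then apply the self-greedy-implies-optimal criterion; the auto-regressive fixed point is handled by the same triangular unwinding. If anything you are slightly more careful than the paper, which defers the $Q$-from-$M$ identity to Proposition~16 of \citet{fb_v1} without flagging the $t\ge 0$ versus $t\ge 1$ convention, and which does not remark on uniqueness of the auto-regressive fixed point.
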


Further theoretical properties and proofs are given in
Appendix~\ref{sec:theory}. In particular, Theorem~\ref{thm:univapprox}
establishes that autoregressive FB with two blocks is enough to represent
\emph{any} task encoding $r\mapsto z_r$, while vanilla FB is contrained
to a linear task encoding $r\mapsto z_r$. Thus, autoregressive FB is
inherently more expressive.

Training $F$ and $B$ for this setup is similar to \citet{fb_v1}, except
that $B$ depends on $z$, which has some consequences for minibatch
sampling, and results in higher variance.  The details are given in
Appendix~\ref{sec:arfbalgo} and Algorithm~\ref{alg:fb-aware}.  Training is based on the measure-valued
Bellman equation satisfied by $M^{\pi_z}$: we plug in the model
$M^{\pi_z}\approx \transp{F(z)}B(z)\rho$ in this equation and minimize
the Bellman gaps. 

There is little computational overhead compared to vanilla FB. In
practice, we enforce the auto-regressive property via a single neural
network $B$, by dropping a specific subset of the layer connections across
neurons (Fig.~\ref{fig:Bschema}). This implementation allows for
efficient training: given access to any specific $z$ and $s$, the output
$B(s,z)$ can be computed in a single forward pass. On the other hand, the
computation of the fixed point $z_r$ from \eqref{eq:z} requires several
forward passes through the network $B$ in Fig.~\ref{fig:Bschema}, but
this occurs only at test time when the reward function is known.

Auto-regressive FB models the successor features $M^{\pi_z}$ via a
model $M^{\pi_z}\approx \transp{F(z)}B(z)\rho$ with full
dependency on $z$, versus $M^{\pi_z}\approx \transp{F(z)}B\rho$ for
vanilla FB. This is more natural, especially for large $\gamma$. Indeed, 
for $\gamma\to 1$ we have
$M^{\pi_z}(s_0,a_0,\d s)=\frac{1}{1-\gamma} \mu_z(\d s)+o(1/(1-\gamma))$
with $\mu_z$ the stationary distribution of $\pi_z$, namely,
approximately rank-one with $z$ dependency on the $s$ part. The vanilla
FB model has no $z$ dependency on the $s$ part, only on the $(s_0,a_0)$
part, which means all stationary distributions $\mu_z$ must be
approximated using the shared features $B(s)$.
\NDY{could have some plots comparing the FB
training loss and
the FBAR training loss}

\subsection{Better Offline Optimization for FB: Advantage Weighting and Other
Improvements}
\label{sec:aw}

Like off-policy algorithms designed for the offline RL setting,
vanilla FB training appears prone to distribution shift, hindering
its ability to scale and to learn on datasets exhibiting mixtures
of behaviors for various tasks. Inspired by the recent analysis
\citep{asac}, we propose a set of modifications to FB training to
overcome these limitations.

\paragraph{Improved advantage weighting objective} We introduce an
alternative policy optimization step for FB, based on recent
analysis and advancements in offline RL algorithms with policy
constraints.
We use an improved version of the \emph{advantage-weighted} (AW) regression loss
\citep{awr}, commonly used in popular recent algorithms \citep{awac, IQL,
CRR, XQL, asac}. Following \citet{awac}, a first version starts with
sampling a
batch of $n$ transitions from the data, and updates the parametric policy
$\pi_\theta$ to optimize
\begin{gather}
\label{eq:sec3:awac_obj}
    \argmax_\theta\E_{(a_{1:n},s_{1:n})\sim \calB}\left[\sum^n_{i=1}w(s_i, a_i)\log \pi_\theta(a_i|s_i)\right],
    \\ \text{where} \quad w(s_i, a_i)=\frac{\exp(A_\phi(s_i,
    a_i)/\beta)}{\sum^n_{j=0}\exp(A_\phi(s_j, a_j)/\beta)}, \notag
    \\
    \text{and} \quad A_\phi(s,a)=Q_\phi(s, a) - \E_{a'\sim\pi}[Q_\phi(s,
    a')] \notag
\end{gather}
is the advantage function as estimated by the critic model $Q_\phi$. The
weights $w(s, a)$ are a weighted importance sampling (WIS) approximation
of $\exp(A_\phi(s, a)/\beta)/Z$, where $Z = \E_{s, a\sim \calB }\left[\exp(A_\phi(s,a)/\beta)\right]$.

In the FB framework, the policies are conditioned by the latent variable
$z$, and the $Q$-function estimate is $Q_\phi(s,a,z)=F_\phi(s,a,z)^Tz$.
Therefore, a direct transposition of \eqref{eq:sec3:awac_obj} to the FB
framework leads to the following objective for training the policy:
\begin{gather}
\label{eq:sec3:awac_fb}
    \argmax_\theta\E_{(a_{1:n},s_{1:n})\sim \calB, {z_{1:n}\sim Z}}\left[\sum^n_{i=1}w(s_i, a_i. z_i)\log \pi_\theta(a_i|s_i, z_i)\right],
    \\ \text{where} \quad w(s_i, a_i, z_i)=\frac{\exp(A_\phi(s_i, a_i, z_i)/\beta)}{\sum^n_{j=0}\exp(A_\phi(s_j, a_j, z_j)/\beta)}, 
\notag
    \\
	\quad A_\phi(s,a,z)=F_\phi(s, a, z)^Tz - \E_{a'\sim\pi_z}[F_\phi(s, a', z)^Tz], \notag
\end{gather}
The choice of sampling an independent $z_i$ for each sample $(s_i,a_i)$,
as opposed to using a single $z$ across a minibatch, was
made based on empirically faster learning.

However, the variance and bias of this weighted importance sampling approach have a linear
inverse relationship with the batch size $n$.
Instead, 
we propose to 
use modified weights $w'(s, a, z)$ that implement
\textit{improved weighted importance sampling} (IWIS), a simple change to
WIS
proposed by \citet{iwis}, shown to reduce the bias of WIS from $O(n^{-1})$
to $O(n^{-2})$.
Integrating IWIS yields our final \emph{policy improvement objective}:
\begin{gather}
\label{eq:sec3:iwis_fb}
    \argmax_\theta\E_{(a_{1:n},s_{1:n})\sim \calB, {z_{1:n}\sim Z}}\left[\sum^n_{i=1}w'(s_i, a_i. z_i)\log \pi_\theta(a_i|s_i, z_i)\right],
    \\ \text{where} \quad w'(s_i, a_i, z_i)\propto \frac{w(s_i, a_i,
    z_i)}{\sum_{j\neq i}w(s_j, a_j, z_j)} \quad \text{and} \quad
    \sum^n_{j= 0}w'(s_j, a_j, z_j)=1, \notag %
\end{gather}
and $w(s_i,a_i,z_i)$ is as in \eqref{eq:sec3:awac_fb}.

We validate the effect of IWIS in
Table~\ref{appB:tab:full_bench_res} (Appendix~\ref{sec:ablations}): FB-AW, which uses the IWIS weights
\eqref{eq:sec3:iwis_fb}, performs
slightly but consistently better than with the WIS weights
\eqref{eq:sec3:awac_fb}. Figure~\ref{sec4:fig:ex_perf_bias}
(Appendix~\ref{sec:ablations}) illustrates how AW negates the
overoptimism of vanilla FB for predicting future rewards.

\paragraph{Evaluation-based sampling.}
Furthermore, following~\citet{asac}, we integrate
\textit{evaluation-based sampling} (ES), an additional
component to mitigate the undesirable consequences of learning a Gaussian
policy, which is generally insufficient to capture the distribution from the
exponentiated advantages.
Namely, when deploying $\pi_\theta$ at test time, we approximate the argmax in
\eqref{eq:piz} by sampling $M$ actions
$a_1,\ldots,a_M$ from the trained policy $\pi(s)$, and perform the one with
the largest $Q$-value as predicted by $F_\phi(s, a_i, z)^Tz$.
The specific impact of this change is illustrated in
Fig.~\ref{fig:es_abl} (Appendix~\ref{sec:ablations}).

\paragraph{Uncertainty representation.}
To represent uncertainty in the model, we train two different networks $F_1$
and $F_2$ for the forward embedding, inspired by \citep{td3,
fb_0shot}. However, we introduce two changes.

In the Bellman equation, we
use the \emph{average} of the resulting two estimates of the
target successor measures, namely,
$\tfrac{1}{2}(F_1(s_{t+1}, a_{t+1}, z)^\top B(s') + F_2(s_{t+1}, a_{t+1},
z)^\top B(s'))$. This departs from \cite{fb_0shot}, which used the min between
the target successor measures, namely, $\min \{ F_1(s_{t+1},
a_{t+1}, z)^\top B(s'), \allowbreak F_2(s_{t+1}, a_{t+1}, z)^\top B(s') \}$,
in line with \citep{td3}.
Indeed, for $Q$-function estimates, a min might encode
some form of conservatism, but for successor measures, the interpretation
of a min is less direct. \footnote{For instance, since the $Q$-function
for reward $r$ is $Q=M.r$, taking the min of $M$ might encode a min for
a reward $r$ but a max for the reward $-r$.}

Finally, we use two fully parallel networks for $F_1$ and $F_2$, while 
\cite{fb_0shot} opted for a shared processing network with two
separate shallow heads for $F_1$ and $F_2$.

The specific impact of these changes is reported in
Table~\ref{appB:tab:full_bench_res} (Appendix~\ref{sec:ablations}).

\section{Evaluation}

\subsection{Algorithms and Baselines}

We mainly compare the following algorithms:
\begin{itemize}
\item ``Vanilla'' FB: the classical implementation of FB from \cite{fb_0shot} that employs TD3 policy improvement loss.

\item FB-AW (FB with advantage weighting): The FB method using the
advantage weighting components described in Section~\ref{sec:aw}.

\item FB-AWARE (FB with advantage weighting and auto-regressive
encoding): The FB method using both AW and the auto-regressive component
from Section~\ref{sec:ar}. For the auto-regressive part, we test either
$4$ or $8$ consecutive auto-regressive blocks for $B$.
\item On some environments (those where AW is not necessary to reach good
performance) we also include FB-ARE without the AW component.
\end{itemize}

Very few baselines provide true zero-shot behavioral foundation models
that can tackle any new task at test time with no fine-tuning. (More
baselines exist for the restricted case of goal-reaching tasks, namely, reaching
a given target state, see Section~\ref{sec:relatedwork}.)
In addition to the vanilla FB baseline, we include the following non-FB
baseline:
\begin{itemize}
\item Universal successor features \citep{borsa2018universal} based on
\emph{Laplacian eigenfunctions} as the base features \citep{fb_0shot}.
This version of successor features was found to perform best in
\cite{fb_0shot}. We denote it by LAP-AW, since we use the advantage
weighting as in FB-AW.
\end{itemize}

All the aforementioned variants of FB use the same architecture and
consistent hyperparameters.

\subsection{Datasets and Benchmarks}

We consider a series of environments, tasks, and datasets for these
environments, as follows.

\begin{itemize}

\item We start with the \emph{Jaco arm} domain~\citep{urlb}, a simple
robotic arm model. The tasks consist in reaching various target positions
(Section~\ref{sec:jaco}). This provides a test of spatial precision.

For this domain, we build a training dataset
via the \emph{RND} unsupervised exploration
method from~\citet{exorl}, which provides good data diversity if exploration is not too difficult in an
environment.

\item Next, we consider four standard domains from the \emph{DeepMind Control
(DMC)
Suite}~\citep{dmc}: Walker, Cheetah, Quadruped, and Humanoid. Since we
want to build behavior foundation models and not task-specific agents, on
top of the classical tasks for these environments (walk, run...) we
introduce a number of additional tasks such as bounce, flip, pullup...,
described in Appendix~\ref{sec:newtasks}.

For these domains, we consider two training datasets:
\begin{itemize}
\item We build a first dataset using RND, as for Jaco.
However, RND appears to provide insufficient exploration (particularly on
Humanoid); moreover, the RND trainset does not contain any purposeful
trajectories.

\item Therefore, we also train on the
\emph{MOOD} datasets from~\citet{asac}. MOOD
contains a mixture of behaviors, obtained as follows. For each
environment, a
small number of ``classical'' tasks are selected.
Then an online TD3 algorithm is used to train a classical agent for each
of these tasks. The set of trajectories produced by these agents during
training are then pooled and merged into a single dataset for the
environment. Thus,
the mixed-objective MOOD datasets include high-quality examples for a few
tasks in each environment.

Evaluation on the MOOD dataset must distinguish between tasks that
contributed to the dataset (\emph{in-dataset} tasks), and tasks that did
not (\emph{out-of-dataset} tasks). \footnote{We avoid ``in-distribution''
and ``out-of-distribution'', since FB is not trained on a distribution of
tasks but in an unsupervised way given the data.} A priori, one would expect the former
to be easier, as information from the original
single-task agents is present in the data.
To evaluate the ability of the FB models to generalize beyond
in-dataset tasks, we used the new tasks from Appendix~\ref{sec:newtasks} as
out-of-dataset tasks.
\end{itemize}

\item 
Finally, to test the generality of the approach, we also train FB,
FB-AW and FB-AWARE agents on the locomotion tasks of the \emph{D4RL benchmark}~\citep{d4rl}. Here we stick
to the original tasks in the benchmark, and compare FB performance to
the best task-specific offline RL agents in the literature. Since those
are single-task while FB is a generalist agent, this is a natural
\emph{topline}
for FB, so we expect FB to reach a good fraction of the performance of
the best task-specific agents, in line with the methodology of
\cite{fb_0shot}.

\end{itemize}

\subsection{Empirical Evaluation}
\label{sec:experiments}

We train FB, FB-AW, FB-AWARE (4 and 8 blocks) and LAP-AW on the four DMC
locomotion environments (Walker, Cheetah, Quadruped and Humanoid), as
well as the Jaco arm domain. We pretrain each model on both offline
datasets (MOOD and RND), and repeat each training 5 times (with different random seeds).

We evaluate each model on several downstream tasks per environment. For
each model and task, we sample 100,000 states $\{s\}$ from the offline
dataset and compute their corresponding task reward $\{ r(s)\}$ in order
to infer the task encoding vector $z_r$ (\eqref{eq:FBz} or
\eqref{eq:z}). Then we compute the cumulated reward
achieved by the policy $\pi_{z_r}$, computed using the task-specific
reward, and averaged over 100 episodes. Finally, we report the average and
variance of the cumulative reward over the 5 pre-trained models (with
different seeds).

\begin{figure}[t]
    \centering
    \includegraphics[width=0.95\linewidth]{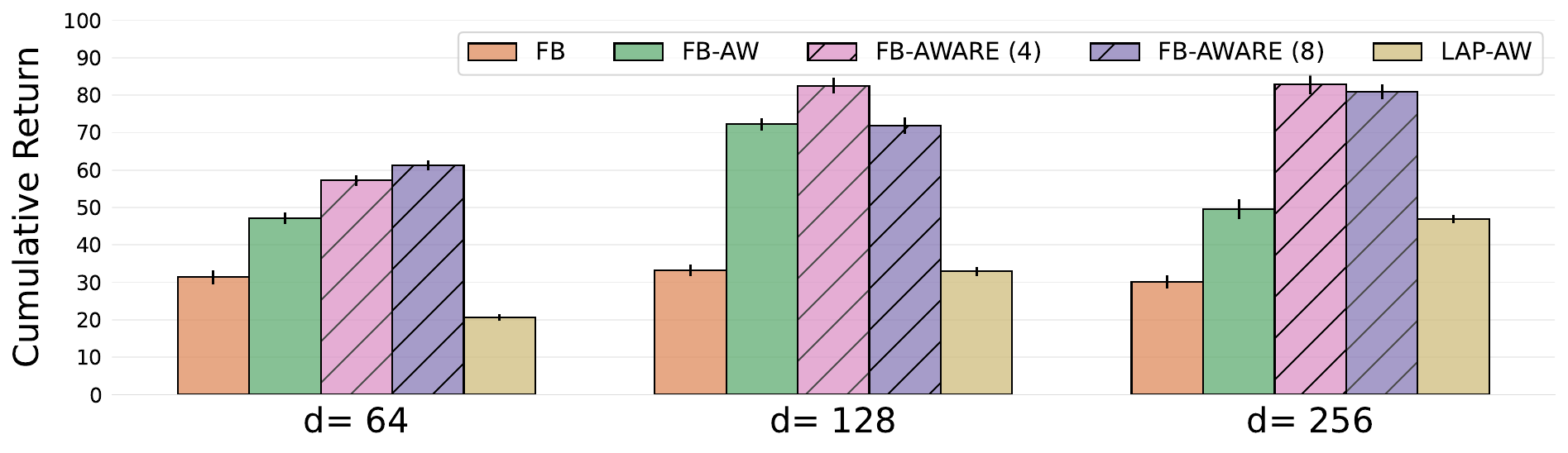}
    \caption{Average cumulative reward achieved by the algorithms,
    trained on RND dataset for different representation dimensions when
    aiming to reach goals (four randomly selected goals and four corner
    goals), in the Jaco arm environment.
    }
    \label{fig:random_jaco_rnd}
\end{figure}%

\subsubsection{Jaco Arm Results}
\label{sec:jaco}

We train each algorithm in the Jaco environment using the RND dataset,
for three choices of representation dimension: $d=64, 128, \text{ and } 256$. The tasks involve reaching a goal within an episode length of 250 time steps, with the agent receiving a reward of approximately 1 when the arm's gripper is close to the target goal specified by its $(x, y, z)$ coordinates.
For the goals, we included the four corners of the environment, plus
four goals selected at random (once and for all, common to all the
algorithms tested).

In Figure~\ref{fig:random_jaco_rnd}, we depict the average rewards
attained by each algorithm for reaching this mixture of goals.
The resulting goal-reaching rewards for dimensions 64, 128, and 256 are presented in Tables~\ref{tab:jaco_rnd_64}, \ref{tab:jaco_rnd_128}, and~\ref{tab:jaco_rnd_256}, respectively.

FB-AW significantly outperforms FB, more than doubling the score for the
best dimension $d=128$.  FB-AWARE with 4 autoregressive blocks further enhances performance by a
relative margin of about $15\%$.

\subsubsection{DMC Locomotion Results}
\label{sec:dmc_loc}

For Cheetah, Quadruped, Walker and Humanoid, the 
MOOD dataset results in substantially better models than the RND dataset,
whatever the algorithm (Appendix~\ref{sec:full_results},
Table~\ref{tab:dmc_mood} vs
Table~\ref{tab:dmc_rnd}).
This is especially striking for Humanoid, where RND does
not explore enough and even a classical single-task TD3 agent is hard to
train.
Therefore, we focus the discussion on MOOD, with full
RND results in Appendix~\ref{sec:full_results}.

\begin{figure}[t]
    \centering
    \includegraphics[width=0.95\linewidth]{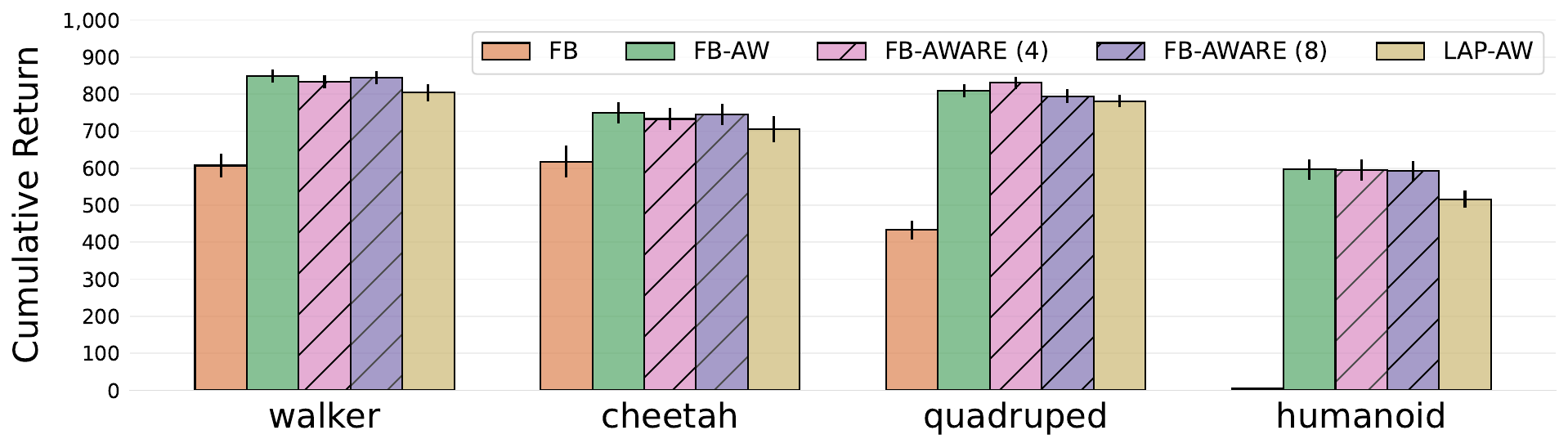}
    \caption{Averaged cumulative reward achieved by the algorithms on
    \emph{in-dataset tasks}, trained on MOOD
    dataset for DMC Locomotion.}
    \label{fig:train_tasks}
\end{figure}%

\begin{figure}[t]
    \centering
    \includegraphics[width=0.95\linewidth]{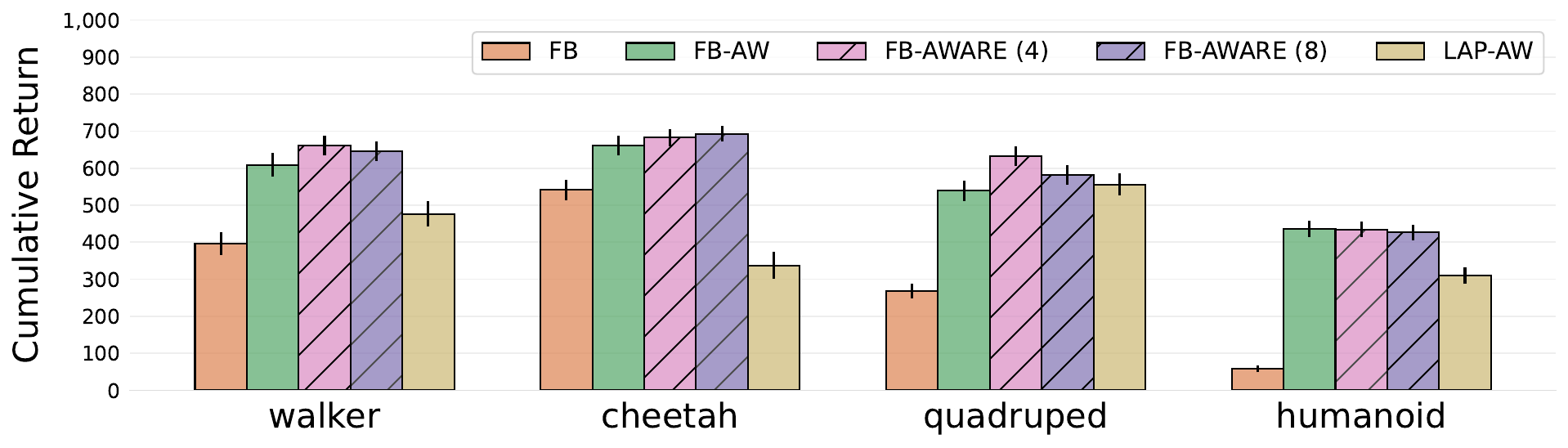}
    \caption{Average cumulative reward achieved by the algorithms on 
    \emph{out-of-dataset} tasks, trained on MOOD dataset for DMC
    Locomotion.}
    \label{fig:test_tasks}
\end{figure}%

Figures~\ref{fig:train_tasks} and~\ref{fig:test_tasks} report the results for the Cheetah,
Quadruped, Walker and Humanoid environments, using MOOD data for
training, evaluated on in-dataset tasks and out-of-dataset tasks
respectively. We used a fixed representation dimension for all algorithms
($d=64$ for Walker, Cheetah and Quadruped, $d=128$ for Humanoid). 

In this setup, the advantage weighting component proves crucial for
achieving satisfactory performance, particularly in the Humanoid
environment, where vanilla FB performance is near-zero. However, on the
lower-quality RND dataset, the advantage weighting component appears to
hurt performance (Appendix~\ref{sec:full_results},
Table~\ref{tab:dmc_rnd}). This is consistent with observations in
\citet{exorl} for the single-task setting, where conservative offline-RL
methods hurt performance on RND data.

LAP-AW does well at in-dataset tasks, but lags behind FB for
out-of-dataset tasks. This may be because LAP-AW's learned features are
closely tied to the in-dataset tasks, derived from the eigenfunctions of
the Laplacian of the behavior policy present in the dataset.

FB-AW and FB-AWARE exhibit the most favorable and consistent overall
performance. The autoregressive component provides a slight enhancement
in out-of-dataset tasks across all environments except from Humanoid.
This slight difference is not observed on the RND dataset
(Table~\ref{tab:dmc_rnd}).
The best-performing model overall is obtained with the MOOD dataset and FB-AWARE algorithm.

\subsubsection{Performance of FB-AW and FB-AWARE on D4RL }
\label{sec:d4rl}

Finally, to test the generality and robustness of these methods, we test
performance on the D4RL benchmark after reward-free training.  D4RL is a
ubiquitous benchmark, used by many recent offline RL research
for evaluation and comparison.

Here, we are comparing the multitask, unsupervised FB-AWARE agent to
task-specifics agents, so the performance of the latter are a natural
\emph{topline} for FB-AW and FB-AWARE. In line with \cite{fb_0shot}, we
expect FB-AWARE to reach a good fraction of the performance of the agents
trained specifically on each task.

So we compare FB-AWARE's performance with the results available for a
large pool of offline RL algorithms optimizing for an individual
objective with full access to rewards. This setting is quite different
from Section~\ref{sec:dmc_loc} and especially the MOOD datasets,
since D4RL datasets mostly comprise trajectories from agents trying to
accomplish a single task.

FB-AW and FB-AWARE's overall performance matches the task-specific recent
state-of-the-art from XQL and IQL. There is a slight advantage to
FB-AWARE over FB-AW, although this falls within the overall margin of
error.

This
establishes that zero-shot,
task-agnostic behavior foundation models trained via FB can compete 
with top task-specific agents for offline RL on standard benchmarks.

\begin{table}[t]

\caption{Performance on the popular locomotion-v2 and FrankaKitchen datasets from the D4RL
benchmark, comparing with recent offline RL algorithms (with
performance as reported in the literature).
FB-AWARE uses 8 AR groups. Other hyper-parameters are tuned
per-environment, consistently with the offline baselines.
}
\vspace{10pt}
\centering
\resizebox{1\columnwidth}{!}{
\begin{tabular}{@{}lccccccccc|ccc@{}}
\toprule
Dataset/Algorithm                       & BC    & 10\%BC & DT    & 1StepRL & AWAC  & TD3+BC & CQL   & IQL   & XQL   & FB& FB-AW               & FB-AWARE           \\ \midrule
                                        & \multicolumn{9}{c}{Reward-based learning}                                        & \multicolumn{3}{c}{Reward free learning} \\ \cmidrule(l){2-13} 
halfcheetah-medium-v2                   & 42.6  & 42.5   & 42.6  & 48.4    & 43.5  & 48.3   & 44.0  & 47.4  & 48.3  & 49.0±1.93 & 60.0±0.9            & 62.7±0.9           \\
hopper-medium-v2                        & 52.9  & 56.9   & 67.6  & 59.6    & 57.0  & 59.3   & 58.5  & 66.3  & 74.2  & 0.9±0.69 & 59.1±5.0            & 59.9±21.4          \\
walker2d-medium-v2                      & 75.3  & 75.0   & 74.0  & 81.8    & 72.4  & 83.7   & 72.5  & 78.3  & 84.2  & 0.5±0.9  & 80.5±11.7           & 89.6±0.8           \\
halfcheetah-medium-replay-v2            & 36.6  & 40.6   & 36.6  & 38.1    & 40.5  & 44.6   & 45.5  & 44.2  & 45.2  & 30.8±23.4  & 52.7±1.2            & 50.8±1.2           \\
hopper-medium-replay-v2                 & 18.1  & 75.9   & 82.7  & 97.5    & 37.2  & 60.9   & 95.0  & 94.7  & 100.7 & 16.4±2.9 & 87.1±3.7            & 89.6±4.0           \\
walker2d-medium-replay-v2               & 26.0  & 62.5   & 66.6  & 49.5    & 27.0  & 81.8   & 77.2  & 73.9  & 82.2  & 9.9±4.9  & 91.7±6.3            & 98.8±0.5           \\
halfcheetah-medium-expert-v2            & 55.2  & 92.9   & 86.8  & 93.4    & 42.8  & 90.7   & 91.6  & 86.7  & 94.2  & 91.7±6.7  & 99.6±0.8            & 100.1±0.7          \\
hopper-medium-expert-v2                 & 52.5  & 110.9  & 107.6 & 103.3   & 55.8  & 98.0   & 105.4 & 91.5  & 111.2 & 1.8±1.2 & 55.9±11.6           & 62.2±9.2           \\
walker2d-medium-expert-v2               & 107.5 & 109.0  & 108.1 & 113.0   & 74.5  & 110.1  & 108.8 & 109.6 & 112.7 & 0.3±0.8 & 109.6±1.3           & 105.8±0.6          \\
\hline \hline
\multicolumn{1}{c}{Locomotion-v2 total} & 466.7 & 666.2  & 672.6 & 684.6   & 450.7 & 677.4  & 698.5 & 692.6 & 752.9 & 22.3 & 696.2               & 719.5              \\
\hline 
kitchen-partial-v0 & 38.0 & -  & - & -  & - & -  & 49.8 & 46.3 & 73.7 & 4±4  &          47.0±4.5    & 52.5±9.4 \\
kitchen-mixed-v0 & 51.5 & -  & - & -  & - & -  & 51.0 & 51.0 & 62.5 & 5±5  &        48.5±7.2      & 53.5±3.8  \\
 \bottomrule
\end{tabular}

}
\end{table}

\subsection{Ablations and Discussion}

Appendix~\ref{sec:ablations} contains additional tests and ablations
concerning each of the components introduced, such as the impact of
dimension $d$, the number of auto-regressive blocks for FB-AWARE,
specific design choices for FB training ($B$ normalization, $z$
sampling), and the offline RL methods introduced in Section~\ref{sec:aw}
(advantage weighting, improved weighted importance sampling,
evaluation-based sampling, and uncertainty representation).

\paragraph{The impact of training data for learning behavior foundation
models.} Perhaps unsurprisingly, the dataset has a large impact on the
performance of behavior foundation models, as exemplified by the higher
scores of FB methods trained on MOOD vs RND for the Locomotion tasks. On
Humanoid, the combination of both MOOD and advantage weighting appears
necessary to reach any reasonable performance at all.

On the other hand, the best algorithm to train a behavior foundation
model also depends on the data available: with only RND data, advantage
weighting actually hurts performance (Table~\ref{tab:dmc_rnd}), although
with these data, performance is relatively poor anyway. This is
consistent with existing observations for classical single-task agents:
on RND data, TD3 is better than
conservative offline-RL approaches
\citep{exorl}.

Therefore, to train good behavior foundation models, it may be necessary
to train variants via several algorithms, and check the model's
performance on a few validation tasks.

\paragraph{Autoregressive FB and fine-grained behaviors.} The advantage
provided by autoregressive FB is clearer on the Jaco arm, and on
out-of-dataset tasks for DMC Locomotion. We can offer some speculative
explanations.

Part of the original motivation behind autoregressive FB was to provide
more precise task encoding. Indeed, vanilla FB projects tasks linearly
onto a subset of tasks, and this results in \emph{reward blurring} (loss
of detail in the actual task being optimized) \citep{fb_v1}. This
may correspond to selecting the largest eigenvalues of the successor
measure, since FB fixed points correspond to its eigenvalues
\citep{successorstates}. Intuitively, autoregressive features should
provide more detailed features based on a first set of coarse-grained
featues: for instance, if a target is roughly located in a region via the
top-level features $B_1$, then $B_2$ should learn more specific features
for the region given by $B_1$ (as encoded by $z_1$).

The Jaco arm domain aims at testing this intuition, by providing a task
in which a target must be precisely reached. The observed advantage of
autoregressive FB for this task is consistent with this intuition,
somewhat suggesting autoregressive features are effective at providing finer spatial precision.

One possible, though speculative, explanation for
the effect on out-of-dataset 
tasks is the following. In vanilla FB, the set of perfectly
optimized tasks is linearly spanned by $B$: thus, it is a linear space of
dimension $d$ among all possible tasks. In autoregressive FB, the subset of
rewards optimized by the features is nonlinear among all possible reward
functions, because the features $B_2$ depend on the continuous variable
$z_1$; for instance, with two auto-regressive blocks, one can represent
all
rewards $r(s)=z_1^T B_1(s)+z_2^T B_2(s,z_1)$. This results in a curved
manifold in the set of reward functions when varying $z_1$ and $z_2$. When facing a new arbitrary reward
function,
it may be easier to find a close match on this manifold than on a linear
subspace.

\paragraph{Limitations.} The environments considered
here are all noise-free, Markovian (history-free) continuous control environments. 

The effect of autoregressive FB is relatively modest in these
experiments. This is surprising given the huge theoretical change in
expressivity compared to vanilla FB. This suggests that the main limiting
factor in our suite of experiments may not be the expressivity of the
behavior foundation model, perhaps due to limited exploration in the training
datasets, or from the relative simplicity of the environments tested.

\section{Related Work}
\label{sec:relatedwork}

The forward-backward framework \citep{fb_v1} builds upon the principles
of successor features~\citep{barreto2017successor, zhang2017deep,
grimm2019disentangled}, the continuous extension to the canonical
successor representation \cite{dayan1993improving} and its continuous
state-space extension. However, in contrast to FB, this line of work has
mostly focused on constructing a set of features, linear w.r.t.\ downstream rewards, using apriori knowledge and heuristic measures such
as Laplacian eigenfunctions
\citep{borsa2018universal}. To this end \citet{fb_0shot} showed,
empirically, the superiority of end-to-end learning with FB as compared
to many such heuristics, in line with results in this same work. The
proposed autoregressive extension to FB could be also applied to this broader class of methods, a noteworthy direction for future work.

Goal-conditioned RL (GCRL) \citep{goal_cond_rl_review} is another area of research closely related to FB, which has seen notable successful applications in real-world robotics \citep{goal_cond_robotics_1_navig, goal_cond_robotics_2_vip, goal_cond_robotics_3_manip}. As with successor features, GCRL has traditionally relied on a priori knowledge taking the form of explicit demonstrations \citep{goal_cond_sup_imitation, goal_cond_sup_im_rl}, handcrafted subgoals \citep{her, goal_cond_subgoals, goal_cond_subgoals_im}, together with other coverage heuristics \citep{goal_cond_sup_iterated, goal_cond_from_bis} -- which have been used both to construct the space of goals and learn its relative multi-task policy. Moreover, in a similar fashion to FB, recent work also strived to model and tackle the GCRL problem with a principled contrastive-like end-to-end objective \citep{goal_cond_contr_0_C_learning, goal_cond_contr_1, goal_cond_contr_2_cdc}. However, GCRL is, by design, more restrictive than successor features and FB as it cannot capture tasks that go beyond reaching individual points in the space of goals.

Other works avoid the linearity constraint of successor features and
vanilla FB by explicitly relying on a prior over tasks. For instance, in
an approach akin to meta-RL but without hand-crafted tasks,
\citet{frans2024unsupervised} use a mixture of random MLPs, random linear
functions, and random goal-reaching to pre-train a set of policies
together with a encoder that quickly identifies a reward function from a
few reward samples. Contrary to FB, the dynamics of the environment plays
no role in building the set of features.

\NDY{should we cite more on
multitask RL/skill learning/meta-RL?}

\section{Conclusions}

Specific offline RL training techniques are necessary to build efficient
FB behavior foundation models in environments such as DMC Humanoid, and
can make the difference between near-zero and good performance.

Employing auto-regressive features greatly enhances the theoretical
expressivity of these foundation models, and improves spatial precision
and task generalization. The improvement is moderate in our setup,
perhaps indicating that BFM expressivity is not a key limiting factor for
these tasks.

These improvements bring zero-shot, reward-free FB BFMs on par with
single-task, reward-trained offline agents on a number of locomotion
environments.

\bibliography{main}
\bibliographystyle{icml2023}

\newpage
\appendix
\onecolumn

\section{Auto-Regressive FB: Extensions, Proofs, Algorithmic
Considerations}
\label{sec:theory}

\subsection{Proof and Extension of Theorem~\ref{thm:arfb}}

Here we prove Theorem~\ref{thm:arfb} and extend it in two directions similar
to \cite{fb_v1}. 

The first extension concerns \emph{goal spaces}, and is useful when we
know in advance that the rewards functions of interest will not depend on
the whole state. For instance, in a multi-agent setting, the reward of an
agent may depend only on its own state, but it must still observe the
whole state to make decisions. This is formalized by assuming that the
reward function only depends on some variable $g=\phi(s)$ rather than the
whole state $s$. Then we can learn with $B(g)$ instead of $B(s)$ (while
$F$ and policies still require the full state).

The second extension only uses the finite-rank $\transp{F}B$ model for the
\emph{advantage} functions, namely, the model is $M^{\pi_z}(s_0,a_0,\d
s)\approx \transp{F(s_0,a_0,z)}B(s',z)+\bar m(s_0,z,s)$ where $\bar m$ is
any function independent of the actions. This lifts part of the
finite-rank restriction, since $\bar m$ itself is unconstrained: the
finite-rank FB model is only applied to the differential effect of actions
on top of the baseline model $\bar m$.

\begin{definition}[Extended forward-backward representation of an MDP]
\label{def:fb2}
Consider an MDP with state space $S$ and action space
$A$.
Let $\phi\from S\times A\to G$ be a function from state-actions to some
goal space $G=\R^k$.

Let $Z=\R^d$ be some representation space. Let
\begin{equation}
F\from S\times A\times Z\to Z,\qquad B\from G\times Z\to Z,\qquad
\bar m\from S\times Z\times G\to \R
\end{equation}
be three functions. For each $z\in Z$, define the policy
\begin{equation}
\pi_z (a|s) \deq \argmax_a \transp{F(s,a,z)}z.
\end{equation}

Let $\rho$ be any measure over the goal space $G$.

We say that $F$, $B$, and $\bar m$ are an \emph{extended forward-backward
representation} of the MDP with respect to $\rho$, if the following
holds:
for any $z\in Z$, any state-action $(s,a)$
the successor measure $M^{\pi_z}$ of policy $\pi_z$
is given by
\begin{equation}
\label{eq:extendedFB}
M^{\pi_z}(s,a,X)=\int_{g\in X}\left(\transp{F(s,a,z)}B(g,z)+\bar m(s,z,g)\right)\rho(\d g)
\end{equation}
for any goal subset $X\subset G$.
\end{definition}

\begin{theorem}[Forward-backward representation of an MDP, with features as
goals]
\label{thm:arfb2}
Consider an MDP with state space $S$ and action space
$A$.
Let $\phi\from S\times A\to G$ be a function from state-actions to some
goal space $G=\R^k$.

Let $F$, $B$, and $\bar m$ be an extended forward-backward representation
of the MDP with respect to some measure $\rho$ over $G$.

Then the following holds.
Let $R\from S\times A \to \R$ be any bounded reward function, and assume that
this reward function depends only on $g=\phi(s,a)$, namely, that there exists a
function $r\from G\to \R$ such that $R(s,a)=r(\phi(s,a))$.

Assume that there exists $z_R\in \R^d$ satisfying
\begin{equation}
\label{eq:zR2}
z_R= \int_{g\in G} r(g)B(g,z_R) \,\rho(\d g).
\end{equation}
This is always the case if $B$ is auto-regressive.

Then:
\begin{enumerate}
\item $\pi_{z_R}$ is an optimal policy for reward $R$ in the MDP.

\item 
The optimal $Q$-function $Q^\star_R$ for reward $R$ is
\begin{equation}
Q^\star_R(s,a)=\transp{F(s,a,z_R)}z_R+
\int_{g\in G} \bar m(s,z_R,g)r(g)\,\rho(\d g).
\end{equation}
The last term does not depend on the action $a$, so computing the $\bar
m$ term is not necessary to obtain 
the advantages $Q^\ast(s,a)-Q^\ast(s,a')$ or the optimal policies.
\end{enumerate}
\end{theorem}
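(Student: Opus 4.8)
The plan is to mirror the vanilla FB optimality proof of \citet{fb_v1}, adapted to the goal map $\phi$ and the action-independent baseline $\bar m$. Two standard ingredients drive everything. First, for any policy $\pi$ and any reward $R(s,a)=r(\phi(s,a))$, the $Q$-function is recovered from the successor measure over goals via the discounted-occupancy identity $Q_R^\pi(s,a)=\int_G r(g)\,M^\pi(s,a,\d g)$. Second, a policy that is greedy with respect to its own $Q$-function is optimal: if $\pi$ is greedy for $Q^\pi$, then in the Bellman evaluation equation $Q^\pi(s,a)=R(s,a)+\gamma\,\E_{s'}\E_{a'\sim\pi(s')}[Q^\pi(s',a')]$ the inner averaging equals a max, so $Q^\pi$ is a fixed point of the (contractive) optimal Bellman operator, hence $Q^\pi=Q^\star_R$ and $\pi$ is optimal. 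I will assemble these with the fixed-point equation \eqref{eq:zR2} and the extended representation \eqref{eq:extendedFB}.

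First I would dispatch the existence claim in the auto-regressive case (the general case is hypothesized). Writing $z=(z_1,\dots,z_d)$ and using that the $i$-th coordinate of $B(g,z)$ depends only on $z_1,\dots,z_{i-1}$, set $z_1=\int_G r(g)B_1(g)\,\rho(\d g)$ and then inductively $z_i=\int_G r(g)B_i(g,z_1,\dots,z_{i-1})\,\rho(\d g)$. Each integral is finite under the boundedness assumption on $r$ (and the standing integrability of $B$, $\bar m$ against $\rho$), and by construction the resulting vector $z_R$ solves \eqref{eq:zR2}. This is precisely the coordinatewise computation already announced in Theorem~\ref{thm:arfb}.

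Now fix $R(s,a)=r(\phi(s,a))$ and a solution $z_R$ of \eqref{eq:zR2}. Plugging \eqref{eq:extendedFB} at $z=z_R$ into the occupancy identity gives
\[
Q_R^{\pi_{z_R}}(s,a)=\transp{F(s,a,z_R)}\!\!\int_G r(g)B(g,z_R)\,\rho(\d g)+\int_G r(g)\,\bar m(s,z_R,g)\,\rho(\d g),
\]
and \eqref{eq:zR2} collapses the first integral to $z_R$, so $Q_R^{\pi_{z_R}}(s,a)=\transp{F(s,a,z_R)}z_R+\int_G r(g)\,\bar m(s,z_R,g)\,\rho(\d g)$. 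Because the $\bar m$ term is independent of $a$, the maximizers of $a\mapsto\transp{F(s,a,z_R)}z_R$ coincide with those of $a\mapsto Q_R^{\pi_{z_R}}(s,a)$; since $\pi_{z_R}$ is by definition supported on an argmax of the former, it is greedy with respect to its own $Q$-function. Invoking the second ingredient then yields $Q_R^{\pi_{z_R}}=Q_R^\star$ and optimality of $\pi_{z_R}$, which proves part~1; substituting back gives the claimed formula for $Q_R^\star$ in part~2, and the action-independence of the $\bar m$ integral immediately gives the closing remark that advantages $Q^\star(s,a)-Q^\star(s,a')$ and the optimal policies are unaffected by $\bar m$.

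I expect the main obstacle to be bookkeeping rather than conceptual: making sure the $Q$-function/successor-measure identity is applied with exactly the time-indexing convention used for $M^\pi$, so that no stray immediate-reward term survives and the $\bar m$ integral is genuinely the \emph{only} action-independent residual in $Q_R^{\pi_{z_R}}$; and making sure the greedy-implies-optimal step is applied to the true successor measure of $\pi_{z_R}$, which is legitimate precisely because Definition~\ref{def:fb2} imposes \eqref{eq:extendedFB} as an exact identity for $M^{\pi_z}$ rather than an approximation. Everything else is direct substitution.
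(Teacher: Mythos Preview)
Your proposal is correct and follows essentially the same route as the paper's proof: express $Q_R^{\pi_{z_R}}$ via the successor-measure occupancy identity, substitute the extended representation \eqref{eq:extendedFB}, collapse the $B$ integral using the fixed point \eqref{eq:zR2}, and then observe that $\pi_{z_R}$ is greedy with respect to its own $Q$-function (since the $\bar m$ term is action-independent), which forces optimality. The paper phrases the last step as the characterization ``$(\pi,Q)$ optimal iff $\pi=\argmax_a Q$ and $Q=Q^\pi_R$'' rather than via the Bellman-contraction argument you sketch, but this is the same standard fact.
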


Theorem~\ref{thm:arfb2} implies Theorem~\ref{thm:arfb}, by taking
$\phi=\Id$ and $\bar m=0$. 

There is an important difference between this result and the
corresponding statement in non-auto-regressive FB (Theorem 4 in
\cite{fb_v1}). For non-autoregressive FB, the FB model provides the
$Q$-functions of all policies $\pi_z$ for all rewards $R$, even if $z\neq
z_R$. Namely, $Q^{\pi_z}_R(s,a)=\transp{F(s,a,z)}z_R$ for all pairs
$(z,R)$ (in the case $\bar m=0$). Here, this only holds when $z=z_R$.
Classical, non-auto-regressive FB provides more $Q$-functions than strictly needed
to obtain the policies: it models the $Q$-functions of $\pi_z$ even for
rewards unrelated to $z$. With auto-regressive FB, the additional
expressivity of the model comes at the price of getting less information
about the other $Q$-functions.

\begin{proof}[Proof of Theorem~\ref{thm:arfb2}]
Let $M^\pi$ be the successor measure 
of policy $\pi$. Let
$m^{\pi}$ be the density of $M^{\pi}$ with respect to $\rho$.
Let $R(s,a)=r(\phi(s,a))$ be a reward function as in the statement of the theorem.

By Proposition~16 in \cite{fb_v1}, 
The $Q$-function of $\pi$ for the reward $R$ is
\begin{align}
Q^\pi_R(s,a)
=\int_g r(g) M^\pi(s,a,\d g)
\end{align}

Let $z_R$ satisfy the fixed point property \eqref{eq:zR2}, and let us
take
$\pi=\pi_{z_R}$.
By definition of an extended FB representation, we have
\begin{align}
Q^{\pi_{z_R}}_R(s,a)
&=\int_g r(g) M^{\pi_{z_R}}(s,a,\d g)
\\&=
\int_g r(g) (\transp{F(s,a,z_R)}B(g,z_R)+\bar
m(s,z_R,g))\rho(\d g).
\\&=
\transp{F(s,a,z_R)}\int_g r(g) B(g,z_R)\rho(\d g)+\int_g r(g) \bar
m(s,z_R,g))\rho(\d g).
\end{align}
But thanks to the fixed point property \eqref{eq:zR2}, we have
$\int_g r(g)B(g,z_R)\rho(\d g)=z_R$.
Therefore, the $Q$-function of $\pi_{z_R}$ for reward $R$ is
\begin{align}
Q^{\pi_{z_R}}_R(s,a)=\transp{F(s,a,z_R)}z_R+\int_g r(g) \bar
m(s,z_R,g))\rho(\d g).
\end{align}

We have to prove that this is the optimal $Q$-function for $R$. A pair of
a $Q$-function and policy $\pi$ are optimal for $R$ if and only if
simultaneously
$\pi(a|s)=\argmax_a Q(s,a)$ and $Q=Q^{\pi}_R$. Here, by definition of the
policies $\pi_z$, we have
\begin{align}
\pi_{z_R}(a|s)&=\argmax_a \transp{F(s,a,z_R)}z_R
\\&=\argmax_a Q^{\pi_{z_R}}_R(s,a)
\end{align}
since the additional term $\int_g r(g) \bar
m(s,z_R,g))\rho(\d g)$ in $Q^{\pi_{z_R}}_R$ does not depend on $a$.

Therefore, $Q^{\pi_{z_r}}_R$ and $\pi_{z_R}$ are optimal for reward $R$,
which ends the proof.
\end{proof}

\begin{theorem}[Auto-regressive features with two levels are a universal approximator for
task encoding]
\label{thm:univapprox}
Assume the state space is finite, so that a reward function
is an element of $\R^{\#S}$. Then, for any continuous task
encoding function $\zeta\from  \R^{\#S} \to \R^d$ mapping rewards $r$ to
task encodings $z=\zeta(r)$, such that $z_r=0$ for $r=0$, there exist neural
networks $B_1(s)$ and $B_2(s,z)$ approximating $\zeta$, namely, for any
$r$,
\begin{equation}
\zeta(r)\approx \E_{s\sim \rho} [r(s)B_2(s,z_1)], \qquad z_1=\E_{s\sim
\rho}[r(s)B_1(s)]
\end{equation}
up to an arbitrary precision.
\end{theorem}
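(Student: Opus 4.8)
The plan is to exhibit explicit \emph{target} feature maps $B_1^\star,B_2^\star$ that reproduce $\zeta$ up to a controllable error, to check that these targets are continuous functions on a compact domain, and then to invoke the classical universal approximation theorem to replace them by genuine neural networks without spoiling the error bound. Throughout I read ``up to arbitrary precision'' as: uniformly on any compact set $K$ of reward vectors, for any $\eta>0$. I will also assume $\rho$ has full support on the finite set $S$ (otherwise restrict $S$ to $\operatorname{supp}\rho$, on which alone $\zeta$ can reasonably depend) and that the first autoregressive block has dimension at least $\#S$ (it can always be taken large enough).

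First I would use the first — necessarily linear — level to encode the reward losslessly. Fixing an injection of $S$ into the coordinates of the block, set $B_1^\star(s)\deq\rho(s)^{-1}e_s$, so that $z_1=\E_{s\sim\rho}[r(s)B_1^\star(s)]=\sum_{s}r(s)e_s$ is simply $r$ embedded in a Euclidean space; write $D$ for the linear map recovering $r$ from $z_1$, so $Dz_1=r$. Since $S$ is finite, $B_1^\star$ is trivially realized by a network (exactly, if states are presented as one-hot vectors).

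The key step, and the one I expect to be the main obstacle, is to build $B_2^\star(s,z)$ so that $\E_{s\sim\rho}[r(s)B_2^\star(s,z_1)]=\zeta(r)$ \emph{while keeping $B_2^\star$ continuous} — continuity of the target is exactly what universal approximation needs. The natural guess $B_2^\star(s,z)\deq\rho(s)^{-1}\,(Dz)(s)\,\zeta(Dz)/\lVert Dz\rVert^2$ does give $\E_{s\sim\rho}[r(s)B_2^\star(s,z_1)]=(\lVert r\rVert^2/\lVert r\rVert^2)\,\zeta(r)=\zeta(r)$, but it is discontinuous at $z=0$. This is precisely where the hypothesis $\zeta(0)=0$ enters (and it is clearly necessary, since $\E_{s\sim\rho}[r(s)B_2(s,\cdot)]\equiv0$ whenever $r=0$): regularize the denominator to $\lVert Dz\rVert^2+\epsilon$, which makes $B_2^\star$ continuous everywhere and yields
\begin{equation}
\E_{s\sim\rho}[r(s)B_2^\star(s,z_1)]=\frac{\lVert r\rVert^2}{\lVert r\rVert^2+\epsilon}\,\zeta(r),
\end{equation}
with error $\tfrac{\epsilon}{\lVert r\rVert^2+\epsilon}\lVert\zeta(r)\rVert$. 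On $K$ this is made uniformly smaller than $\eta$ by first choosing $\delta>0$ so small that $\lVert\zeta\rVert<\eta/2$ on $\{r\in K:\lVert r\rVert^2<\delta\}$ (possible by continuity and $\zeta(0)=0$), then $\epsilon$ so small that $(\epsilon/\delta)\sup_K\lVert\zeta\rVert<\eta/2$.

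Finally I would pass to networks and combine errors. The image $K'$ of $K$ under $r\mapsto z_1$ is compact, and $B_2^\star$ is continuous on $S\times K'$ (with $S$ finite); by universal approximation there is a network $B_2$ with $\sup_{s,\,z_1\in K'}\lVert B_2-B_2^\star\rVert$ arbitrarily small, and since $\lVert\E_{s\sim\rho}[r(s)(B_2-B_2^\star)]\rVert\le\max_s|r(s)|\cdot\sup\lVert B_2-B_2^\star\rVert$ uniformly over $r\in K$, this contributes only an arbitrarily small additional error. Taking $B_1=B_1^\star$ and combining the two error contributions by the triangle inequality gives $\lVert\zeta(r)-\E_{s\sim\rho}[r(s)B_2(s,z_1)]\rVert<\eta$ uniformly on $K$, which is the claim.
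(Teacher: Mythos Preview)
Your proof is correct and shares the paper's high-level strategy: use the first block $B_1(s)=\rho(s)^{-1}e_s$ to encode $r$ losslessly as $z_1$, then let $B_2$ compute $\zeta(r)$ from it. The difference lies in how $B_2$ is built. The paper invokes a Hadamard-type lemma to factor $\zeta(r)=g(r)\cdot r$ with $g$ continuous (via a Taylor expansion, which actually requires $\zeta\in C^3$ rather than merely continuous as the theorem states), and then sets $B_2(s,r)_i=g_{is}(r)/\rho(s)$, obtaining an \emph{exact} representation before appealing to universal approximation. Your route is more elementary: the explicit regularized choice $B_2^\star(s,z)=\rho(s)^{-1}(Dz)(s)\,\zeta(Dz)/(\lVert Dz\rVert^2+\epsilon)$ avoids any smoothness assumption beyond continuity and directly yields a controllable error $\tfrac{\epsilon}{\lVert r\rVert^2+\epsilon}\lVert\zeta(r)\rVert$, handled cleanly via the $\zeta(0)=0$ hypothesis. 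So you trade exactness for a weaker hypothesis, which in fact better matches the theorem as stated. Your added care about compact sets, the passage to networks, and the triangle-inequality bookkeeping are all fine; the only cosmetic point is that the assumption that the first block has dimension at least $\#S$ is implicit in the paper too (it takes $z_1=r\in\R^{\#S}$), so you are not assuming anything extra.
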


\begin{lemma}
Let $f\from \R^n\to \R^d$ be a $C^3$ function such that $f(0)=0$. Then
there exists a continuous matrix-valued function $g\from \R^n\to \R^{n\times d}$
such that for any $x\in \R^n$,
\begin{equation}
f(x)=g(x)\cdot x.
\end{equation}
\end{lemma}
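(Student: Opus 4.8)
The plan is to invoke the integral form of the fundamental theorem of calculus along the segment from $0$ to $x$, i.e.\ the classical ``Hadamard lemma'' argument. Since $f$ is $C^1$ (a fortiori $C^3$) and $f(0)=0$, for every $x\in\R^n$ we may write
\[
f(x)=f(x)-f(0)=\int_0^1 \frac{\d}{\d t}\,f(tx)\,\d t
=\int_0^1 Df(tx)\,x\,\d t=\left(\int_0^1 Df(tx)\,\d t\right)x,
\]
where $Df$ denotes the Jacobian of $f$ (a $d\times n$ matrix of partial derivatives). One then simply \emph{defines}
\[
g(x)\deq \int_0^1 Df(tx)\,\d t
\]
(transposed if the $n\times d$ layout is preferred), so that the identity $f(x)=g(x)\cdot x$ holds by construction. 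This disposes of the algebraic content of the statement.

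The remaining point is to check that $g$ is continuous. Because $f\in C^3$, the map $(t,x)\mapsto Df(tx)$ is jointly continuous on $[0,1]\times\R^n$, and on any compact neighbourhood of a given $x_0$ its entries are uniformly bounded. Continuity of $g$ then follows from the standard theorem on continuity of parameter integrals: if $x_k\to x$, then $Df(tx_k)\to Df(tx)$ uniformly for $t\in[0,1]$ (uniform continuity of $Df$ on compacts), hence $g(x_k)\to g(x)$; alternatively one invokes dominated convergence with the local bound on $Df$. If one wishes, the extra smoothness of $f$ even lets one differentiate under the integral sign and take $g$ of class $C^2$, but only continuity is required here.

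There is no genuine obstacle in this proof: the entire argument is the one-line telescoping identity plus a routine ``limit under the integral'' justification, both immediate from the smoothness hypothesis. The only conceptual remark worth making explicit is the role of $f(0)=0$: it is exactly what allows the replacement $f(x)=f(x)-f(0)$, producing the telescoping integral that starts at the origin and factors out an $x$; without it the statement is plainly false (take $f$ a nonzero constant). Conversely, $C^1$ already suffices for the existence and continuity of $g$; the $C^3$ assumption is retained only because it is what is available in the intended application (and because it gives the cost-free bonus that $g$ may be chosen smooth).
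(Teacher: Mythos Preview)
Your proof is correct and follows the classical Hadamard-lemma route, which is genuinely different from the paper's argument. The paper instead reduces to $d=1$, writes the second-order Taylor expansion $f(x)=\transp{D}x+\tfrac12 \transp{x}Hx+R(x)$ with $R(x)=O(\norm{x}^3)$, and then sets $g(x)\deq D+\tfrac12 Hx+\frac{R(x)}{\norm{x}^2}\,x$; the $C^3$ hypothesis is used precisely to guarantee $R(x)/\norm{x}^2\to 0$ so that $g$ extends continuously to the origin. Your integral construction $g(x)=\int_0^1 Df(tx)\,\d t$ is more direct, avoids any separate treatment of $x=0$, and---as you observe---already works under $C^1$, so it is both simpler and strictly more general. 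The paper's version, by contrast, makes the low-order Taylor structure explicit but genuinely consumes the extra regularity to control the remainder.
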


\begin{proof}[Proof of the lemma]
By working on each output component of $f$ separately, we can assume that
$d=1$. Thus, we have to prove that for any $C^3$ function $f\from \R^n\to
\R$ with $f(0)=0$, there exists a vector-valued function $g\from \R^n\to
\R$ such that $f(x)=\transp{g(x)}x$.

Since $f$ is $C^3$ with $f(0)=0$, we can write its Taylor expansion
\begin{equation}
f(x)=\transp{D}x+\frac12 \transp{x}Hx+R(x)
\end{equation}
where $D=\partial_x f(0)$ is the gradient of $f$ at $x=0$,
$H=\partial^2_x f(0)$ its Hessian, and where the remainder $R$ is
$O(\norm{x}^3)$.

A priori, the function $R(x)/\norm{x}^2$ is defined everywhere except
$x=0$. But since $R$ is $O(\norm{x}^3)$, $R(x)/\norm{x}^2$ tends to $0$
for $x=0$, so it is a well-defined continuous function on the whole
domain.

Thus, let us set
\begin{equation}
g(x)\deq D+\frac12 Hx +\frac{R(x)}{\norm{x}^2}\, x.
\end{equation}
Then, by construction, $\transp{g(x)}x=f(x)$ as needed.
\end{proof}

\begin{proof}{Proof of Theorem~\ref{thm:univapprox}}
By the lemma, there exists a matrix-valued function $g$ such that $\zeta(r)=g(r)\cdot
r$ for $r\in \R^{\#S}$.

Define $B_1(s)$ to be the one-hot encoding $B_1(s)=\1_s/\rho(s)\in \R^{\#S}$,
where $\1_i$ denotes the vector with all zeroes except a $1$ at position
$i$. By universal approximation theorems for neural networks, this choice
of $B_1$ can be realized by a neural network with arbitrary good
approximation (actually a one-layer neural network with identity weights
and no activation function).

Then
\begin{equation}
z_1=\E_{s\sim \rho} [r(s)B_1(s)]=\sum_s \rho(s) r(s) \1_s/\rho(s)=r.
\end{equation}
Then in turn, for any $B_2$, 
\begin{equation}
z_2=\E_{s\sim \rho} [r(s)B_2(s,z_1)]=\sum_s \rho(s) r(s)
B_2(s,r).
\end{equation}

For each $s$ and $r$, $B_2(s,r)$ is an element of $\R^d$. For each
component $1\leq i\leq d$, define
\begin{equation}
B_2(s,r)_i\deq g_{is}(r)/\rho(s)
\end{equation}
where $g(r)$ is the matrix defined above. Then we have, by construction
\begin{equation}
\sum_s \rho(s) r(s)
B_2(s,r)_i=\sum_s r(s) g_{is}(r)
\end{equation}
namely
\begin{equation}
B_2(s,r)=g(r)\cdot r=\zeta(r)
\end{equation}
as needed.

By universal approximation theorems for neural networks, it is possible to
realize this choice of $B_2$ by a neural network, with arbitrarily good
approximation.

Note: the principle of this proof extends to continuous state spaces, by
taking a partition of unity for $B_1$ instead of a one-hot encoding,
though this results in further approximation errors.
\end{proof}

\subsection{Training Loss and Algorithmic Considerations for
Auto-Regressive FB}
\label{sec:arfbalgo}

\paragraph{Loss for FB-AR; sampling.} While plain FB represents the
successor measures as $M^{\pi_z}(s,a,\d
s')\approx\transp{F(s,a,z)}B(s')\rho(\d s')$, auto-regressive FB uses $M^{\pi_z}(s,a,\d
s')\approx\transp{F(s,a,z)}B(s',z)\rho(\d s')$. The training principle
remains the same: learn $F$ and $B$ to minimize the error in this
approximation. However, this leads to several changes in practice.

The error of the FB model can be measured as in plain FB, based on the
Bellman equation satisfies by $M^{\pi_z}$ (see
Appendix~B in \cite{fb_0shot}). For each value of $z$, the
Bellman loss on $M^{\pi_z}(s,a,\d
s')-\transp{F(s,a,z)}B(s')\rho(\d s')$ is
\begin{align}
\label{eq:FBloss}
\mathcal{L}(F, B)
 & =
 \E_{\substack{(s_t, a_t,s_{t+1}) \sim \rho \\ s' \sim \rho}} \left[
 \Big( F(s_t, a_t,z)^\top B(s',z) - \gamma \bar{F}(s_{t+1},
 \pi_z(s_{t+1}),z)^\top \bar{B}(s',z)  \Big)^2 \right]
\nonumber \\
 &\phantom{=}- 2 \E_{(s_t, a_t,s_{t+1}) \sim \rho } \left[ F(s_t,
 a_t,z)^\top B(s_{t+1},z) \right]
 + \texttt{Const}
\end{align}
where $\texttt{Const}$
is a constant term that we can discard since it does not depend on $F$
and $B$. The only difference with plain FB is that now $B$ depends
on $z$.

For training, the variable $z$ is sampled as in \cite{fb_0shot}: namely,
$z$ is sampled 50\% of the time from a standard $d$-dimensional Gaussian
(and later normalized) and 50\% of the time from the $B$ representation
of a state ramdomly sampled from the training data (computed every step
for normal FB and computed every 32 steps for AR FB for speed
\NDY{unclear}). Fig.~\ref{fig:z_B_ablation}
(Appendix~\ref{sec:ablations}) tests the effect of only
sampling $z$ from a Gaussian.

For reasons discussed below, for FB we follow the original strategy of
sampling a different $z$ for each sampled transition, but for FB-AR we
sample a unique $z$ for the whole batch.

\NDY{the above assumes deterministic $\pi_z$}

Algorithm~\ref{alg:fb-aware} implements this loss, together with sampling of $z$.

\paragraph{Minibatch handling, and increased variance for auto-regressive
FB.} Having $B$ depend on $z$ has practical
consequences for variance and minibatch sampling. In vanilla FB,
starting from \eqref{eq:FBloss}, it is possible to sample a minibatch of
$N$ transitions $(s_t,a_t,s_{t+1})$, a minibatch of $N$ values of $s'$, choose a
different value of $z$ for each $s_t$ in the minibatch, compute the $N$ values of
$F(s_t,a_t,z)$ and $B(s')$, and compute the $N^2$ dot products
$\transp{F(s_t,a_t,z)}B(s')$ involved in the loss, at a cost of $N$
forward passes through $F$ and $B$ (Appendix A in \cite{fb_v1}).

In auto-regressive FB, the value of $z$ must be the same for $F$ and $B$
in the loss \eqref{eq:FBloss}. Therefore, contrary to plain FB, we only
sample a \emph{single} value of $z$ for the minibatch ($I=1$ in
Algorithm~\ref{alg:fb-aware}). Then we can
compute the $N$ values $F(s_t,a_t,z)$ and $B(s',z)$, and the $N^2$ dot products
$\transp{F(s_t,a_t,z)}B(s',z)$ for the loss \eqref{eq:FBloss}. This
appears in Algorithm~\ref{alg:fb-aware}.

Using a single value of $z$ per minibatch results in increased variance
of auto-regressive FB with respect to vanilla FB and longer training
times, which we observe in practice.

Indeed, if we want to keep using different values of $z$ for each
$F(s_t,a_t,z)$, we must either compute many more values $B(s',z)$ (one
for each pair $s'$ and $z$), or use fewer dot products by computing fewer
values of $B(s',z)$ and only using them with $F(s_t,a_t,z)$ with the same
$z$.

More precisely, in general we can proceed as follows: Let $k$ be a
hyperparameter controlling the number of distinct values of $z$ we will
use in the minibatch. We sample $N$ transitions $(s_t,a_t,s_{t+1})$, $N$
values of $s'$, and split these samples into $k$ groups of size $N/k$.
For each group, we sample a value of $z$, we compute the values
$F(s_t,a_t,z)$ and $B(s',z)$ for the states in that group, and we
use all dot products $\transp{F(s_t,a_t,z)}B(s',z)$ within that group.

Thus, the solution we used just has $k=1$: the same value of $z$ is used
throughout the minibatch, so we can compute $N^2$ dot products
$\transp{F(s_t,a_t,z)}B(s',z)$ with only $N$ forward passes through $F$
and $B$. The other extreme would be $k=N$: for each $(s_t,a_t,s_{t+1})$,
we pick a value of $z$ and a value of $s'$, compute $F(s_t,a_t,z)$ and
$B(s',z)$, and form the dot product $\transp{F(s_t,a_t,z)}B(s',z)$. This
uses more distinct values of $z$, but has only $N$ dot products. In
practice this would mean a reduced variance from sampling $z$, but an
increased variance from sampling $s'$. Our chosen option $(k=1)$ has the
opposite trade-off. This choice was based on preliminary results showing that, while
using a higher $k$ appeared to learn slightly faster at the beginning,
the fewer dot products led to convergence to slightly lower
performance.

To some extent, this effect may represent a hindrance for autoregressive
FB compared with vanilla FB. However, Fig.~\ref{fig:zperbatch} (Appendix~\ref{sec:ablations}) shows that
this effect is limited: indeed, vanilla FB with a single $z$ per
minibatch performs only slightly worse than vanilla FB with many $z$'s.

\paragraph{Normalization of $B$.}
As in \cite{fb_0shot}, to improve numerical conditioning on $B$, we use
an auxiliary orthonormalization loss which ensures that $B$ is
approximately an orthognoal matrix. Indeed, one can change $F$ and $B$
without changing the $B$ model, by $F\gets FC$ and $B \gets
B(\transp{C})^{-1}$ for any invertible matrix $B$, because the FB model
only depends on the values of $\transp{F}B$ \citep{fb_v1}. In the case of
auto-regressive features, since $B$ depends on $z$, we can do this
normalization separately for each $z$, without impacting the FB model.
Explicitly, the orthonormalization loss is
\begin{align}
\mathcal{L}_{\mathrm{norm}}(B)
&\deq
\E_z
\norm{\E_{s\sim \rho} [B(s,z)\transp{B(s,z)}]-\Id}^2_{\mathrm{Frobenius}}
\\&=
\E_z\E_{s\sim \rho,\, s'\sim \rho}
\left[ (\transp{B(s,z)}
B(s',z))^2
- \norm{B(s,z)}^2_2
- \norm{B(s',z)}^2_2\right]+\texttt{Const}.
\end{align}
where $z$ is sampled as for the main
loss.

Moreover, we further normalize the output of $B$: for each input $(s,z)$
and each auto-regressive block in the output of $B$, we set $B(s,z)\gets
B(s,z)\frac{\sqrt{d_k}}{\norm{B(s,z)}}$ with $d_k$ the size of the block
$k$, so that each output component is of size approximately $1$. 
Fig.~\ref{fig:z_B_ablation}
(Appendix~\ref{sec:ablations}) tests the effect of not using this
output normalization.

\paragraph{Scale invariance in the reward, and normalization of $z$.} In
reinforcement learning, the optimal behavior is the same for reward $r$
or reward $\alpha.r $ with any $\alpha>0$. This inherent invariance property can be directly enforced within
the architecture of FB to help with learning.

In non-autoregressive FB models, since $z=\E_s[r\times B(s)]$, the scaling $r\gets \alpha\times r$ directly translates to a scaling in the task representation $z\gets \alpha z$. Thus, one simple way to enforce invariance with respect to the rewards' scale is to always normalize $z$ to a fixed norm (in practice, norm
$\sqrt{d}$, so each of $z$ value value expectedly has a magnitude close to $1$). We denote this operation by the preprocessing function $f{z}=\bar{z}=\sqrt{d}\times \frac{z}{||z||}$, which we apply when feeding any $z$ to $\pi_\theta$ and $F_{\phi}$. Hence, this ensures that 
$F(s,a,z)=F(s,a,\alpha\times z)$ and $\pi_{z}=\pi_{\alpha\times z}$ by construction.
As a result, the predictions and behaviors made when facing rewards $r$ and $\alpha \times r$
are exactly the same.

However, in our auto-regressive FB models where $z=\E_s[r\times B(s, z)]$, the same strategy cannot be applied when feeding $z$ as input to $B$. This is because we never have access to the full $z$ until the very end of the inference procedure. Hence, given a reward $r$, we still want to have scale-invariance in $B$ but have no means of performing a standard input normalization, as we have no access to the magnitude of the resulting $z$.

A first strategy to counteract this limitation could be to normalize $z$ only within each auto-regressive group, $f_g(z)=\{f_g(z)_1, \dots, f_g(z)_n\}=\{\bar{z}_1, \dots, \bar{z}_n\}$. While this would enforce a fixed scale and not pose test-time issues it would non-trivially affect the information available to $B$ about the actual task $z$, losing any notion of relative magnitude between different groups. For instance, in case for $n=d$, each normalized component $\bar{z}_k$ would be a binary scalar %
only preserving the \textit{sign} from the corresponding task representation $z_k$. 

We overcome these limitations by designing a new `\textit{residual autoregressive} normalization strategy, $f_{ar}$ compatible with the requirements of $B$'s inference while still fully and exclusively preserving the information about $z$'s direction as with traditional normalization. 

Our strategy achieves these properties by using an iterative normalization scheme for each autoregressive component in its output $f_{r}(z)=\{f_r(z)_1, \dots, f_r(z)_n\}$. As with the aforementioned naive approach, we start with normalizing the first component $z_1$ within itself: $f_r(z)_1=\bar{z}_1$. Then, we proceed to \textit{residually normalize} all other $z_k$, also making use of all previous autoregressive components $z_1, \dots z_{k-1}$:
\begin{equation*}
f_r(z)_k=\frac{z_{k}}{||z_{1:k}||},
\end{equation*} %
where $z_{1:k}$ simply corresponds to the concatenated first $k$
auto-regressive components of $z$. Finally, we also rescale each $k^{th}$
component by a constant factor $\sqrt{\sum^k_{i=0} d_i}$ to avoid biasing
later components to have a smaller magnitude at initialization and
incentivize $||f_{r}(z)||\approx \sqrt{d}$. We note there is a bijective
map between traditional normalization and this auto-regressive scheme, thus, preserving the full information of the direction component of $z$ without requiring the full vector.

\edo{perhaps, we can have a small `proposition', the 1-to-1 mapping or residual AR norm with the direction component should be quite easy to prove via contradiction.}

\subsection{Algorithms}
\label{app: algos}

FB-AWARE training is described in Algorithm~\ref{alg:fb-aware}. For
reference,
Algorithm~\ref{alg:fb} describes vanilla FB training.

\begin{algorithm}[h!]
	\caption{FB training}\label{alg:fb}
\begin{small}
	\begin{algorithmic}[1]
 \State \textbf{Inputs} Offline dataset $\mathcal{D}$, number of ensemble
 networks $M$ for $F$, randomly initialized network $\{F_{\theta_m}\}_{m \in [M]}$, $B_\omega$ and $\pi_\phi$, transition mini-batch size $I$ mixing probability $\tau_{\mathrm{mix}}, $Polyak coefficient $\zeta$, orthonormality regularisation coefficient $\lambda$.
    \vspace{0.2cm}
	\For{$t = 1, \dots$}
 \State { \bf \textcolor{gray!50!blue}{/* \texttt{Sampling}}}
        \State Sample $I$ latent vectors \\
        \State $z \sim \left\{
    \begin{array}{lll}
        \mathcal{N}(0, I_d) &  & \mbox{with prob } 1 - \tau_{\mathrm{mix}} \\
        B(s) & \mbox{where } s \sim \mathcal{D},& \mbox{with prob } \tau_{\mathrm{mix}} \\
    \end{array}
\right.$
    \State $z \leftarrow \sqrt{d} \frac{z}{\| z\|}$
    \State Sample a mini-batch of $I$ transitions $\{(s_{i}, a_{i}, s_{i}')\}_{i\in [I]}$ from $\mathcal{D}$
    
    \State {\bf  \textcolor{gray!50!blue}{/* \texttt{Compute FB loss}}}
	 \State Sample $a_{i}' \sim \pi_\phi(s_{i}',z_i)$ for all $i\in[I]$
	\State $\mathscr{L}_{\texttt{FB}} (\theta_m, \omega) =\frac{1}{2 I (I-1)} \sum_{j \neq k} \left( F_{\theta_k}(s_{i}, a_{i},z_i)^\top B_\omega(s_{k}')  - \gamma \frac{1}{M}\sum_{m \in [M]} F_{\theta_m^{-}}(s_{i}',a_{i}',z_i)^\top B_{\omega^{-}}(s_{k}')  \right)^2$\\ 
	 $\qquad\qquad\qquad - \frac{1}{I} \sum_{i} F_{\theta_k}(s_{i}, a_{i},z_i)^\top  B_\omega(s_{i}'), \quad  \forall m \in [M]$
	
	\State {\bf  \textcolor{gray!50!blue}{/* \texttt{Compute orthonormality regularization loss}}}
	\State $\mathscr{L}_{\texttt{ortho}} (\omega) = \frac{1}{2 I(I-1)}  \sum_{i \neq k} (B_{\omega}(s_{i}')^\top B_{\omega}(s_{k}')^2 - \frac{1}{I} \sum_{i} B_{\omega}(s_{i}')^\top B_{\omega}(s_{i}') $
	\State {\bf  \textcolor{gray!50!blue}{/* \texttt{Compute actor loss}}}
	\State Sample ${a}^\phi_{i} \sim \pi_\phi(s_{i},z_i)$ for all $i\in[I]$
	\State $\mathscr{L}_{\texttt{actor}} (\phi) = -\frac{1}{I}\sum_{i} \left( \min_{m\in[M]} F_{\theta_m}(s_{i},{a}^\phi_{i} ,z_i)^T z_i \right)$
	\State {\bf  \textcolor{gray!50!blue}{/* \texttt{Update all networks}}}
	\State $\theta_m \leftarrow \theta_m - \xi \nabla_{\theta_m}  (\mathscr{L}_{\texttt{FB}}(\theta_k, \omega)$  for all $m\in[M]$
	\State $\omega \leftarrow \omega - \xi \nabla_{\omega}  (\sum_{l\in[m]} \mathscr{L}_{\texttt{FB}}(\theta_l, \omega) + \lambda \cdot  \mathscr{L}_{\texttt{ortho}}(\omega))$
	\State $\phi \leftarrow \phi - \xi \nabla_{\phi} \mathscr{L}_{\texttt{actor}} (\phi)$
	\State {\bf  \textcolor{gray!50!blue}{/* \texttt{Update target networks}}}
	\State  $\theta_m^{-} \leftarrow \zeta \theta_m^{-} + (1-\zeta) \theta_m$  for all $m\in[M]$ 
	\State  $\omega^{-} \leftarrow \zeta \omega^{-} + (1-\zeta) \omega$
\EndFor
 \end{algorithmic}
 \end{small}
	\end{algorithm}

\begin{algorithm}[h!]
	\caption{FB-AWARE training}\label{alg:fb-aware}
\begin{small}
	\begin{algorithmic}[1]
 \State \textbf{Inputs} Offline dataset $\mathcal{D}$, number of ensemble networks $M$, randomly initialized network $\{F_{\theta_m}\}_{m \in [M]}$, $B_\omega$ and $\pi_\phi$, transition mini-batch size $J$, latent vector mini-batch size $I$, number of autoregressive blocks $K$, mixing probability $\tau_{\mathrm{mix}}, $Polyak coefficient $\zeta$, orthonormality regularisation coefficient $\lambda$, temperature $\beta$.
    \vspace{0.2cm}
	\For{$t = 1, \dots$}
 \State { \bf \textcolor{gray!50!blue}{/* \texttt{Sampling}}}
        \State Sample $I$ latent vectors \\
        \State $z \sim \left\{
    \begin{array}{lll}
        \mathcal{N}(0, I_d) &  & \mbox{with prob } 1 - \tau_{\mathrm{mix}} \\
        (z_1, \ldots, z_K) = (B_1(s), \ldots, B_K(s, z_1, \ldots, z_{K-1})) & \mbox{where } s \sim \mathcal{D},& \mbox{with prob } \tau_{\mathrm{mix}} \\
    \end{array}
\right.$
    \State $z \leftarrow \sqrt{d} \frac{z}{\| z\|}$
    \State Sample a mini-batch of $I \times J$ transitions $\{(s_{i, j}, a_{i, j}, s_{i, j}')\}_{i\in [I], j \in [J]}$ from $\mathcal{D}$
    
	 \State Sample $a_{i, j}' \sim \pi_\phi(s_{i, j}',z_i)$ for all $i\in[I], j \in [J]$
	\State $\mathscr{L}_{\texttt{FB}} (\theta_m, \omega) =\frac{1}{2 I J (J-1)} \sum_{i} \sum_{j \neq k} \left( F_{\theta_k}(s_{i, j}, a_{i, j},z_i)^\top B_\omega(s_{i, k}', z_i)  - \gamma \frac{1}{M}\sum_{m \in [M]} F_{\theta_m^{-}}(s_{i, j}',a_{i, j}',z_i)^\top B_{\omega^{-}}(s_{i, k}', z_i)  \right)^2$\\ 
	 $\qquad\qquad\qquad - \frac{1}{I J} \sum_{i} \sum_{j} F_{\theta_k}(s_{i, j}, a_{i, j},z_i)^\top  B_\omega(s_{i, j}', z_i), \quad  \forall m \in [M]$
	
	\State {\bf  \textcolor{gray!50!blue}{/* \texttt{Compute orthonormality regularization loss}}}
	\State $\mathscr{L}_{\texttt{ortho}} (\omega) = \frac{1}{2 IJ (J-1)} \sum_i \sum_{j \neq k} (B_{\omega}(s_{i, j}', z_i)^\top B_{\omega}(s_{i, k}', z_i)^2 - \frac{1}{IJ} \sum_{i} \sum_{j} B_{\omega}(s_{i, j}', z_i)^\top B_{\omega}(s_{i, j}', z_i) $
	\State {\bf  \textcolor{gray!50!blue}{/* \texttt{Compute actor loss}}}
 \State $A(s_{i, j},a_{i,j},z_i) \leftarrow \sum_{m}F_{\theta_m}(s_{i,j}, a_{i,j}, z_i)^Tz_i - \E_{a_{i,j}'\sim \pi_\phi(s_{i,j}, z_i)}[\min_m F_{\theta_m}(s_{i, j}, a_{i,j}', z_i)^Tz_i]$
 \State $w(s_{i,j}, a_{i, j}, z_{i}) \leftarrow \frac{\exp(A(s_{i,j}, a_{i,j}, z_i)/\beta)}{\sum_{i',j'}\exp(A(s_{i', j'}, a_{i',j'}, z_{i'})/\beta)}$
 \State $w'(s_{i,j}, a_{i,j}, z_i)\propto \frac{w(s_{i,j}, a_{i,j},
    z_i)}{\sum_{(i',j') \neq (i,j)}w(s_{i', j'}, a_{i', j'}, z_{j'})} $
    \State $\mathscr{L}_{\texttt{actor}} (\phi) = -\frac{1}{IJ}\sum_{i, j} \ \overline{w'}(s_{i,j}, a_{i,j}, z_i)\log \pi_\phi(a_{i,j} \mid s_{i,j}, z_i) $
\State {\bf  \textcolor{gray!50!blue}{/* \texttt{Update all networks}}}
\State $\theta_m \leftarrow \theta_m - \xi \nabla_{\theta_m}  (\mathscr{L}_{\texttt{FB}}(\theta_k, \omega)$  for all $m\in[M]$
	\State $\omega \leftarrow \omega - \xi \nabla_{\omega}  (\sum_{l\in[m]} \mathscr{L}_{\texttt{FB}}(\theta_l, \omega) + \lambda \cdot  \mathscr{L}_{\texttt{ortho}}(\omega))$
	\State $\phi \leftarrow \phi - \xi \nabla_{\phi} \mathscr{L}_{\texttt{actor}} (\phi)$
	\State {\bf  \textcolor{gray!50!blue}{/* \texttt{Update target networks}}}
	\State  $\theta_m^{-} \leftarrow \zeta \theta_m^{-} + (1-\zeta) \theta_m$  for all $m\in[M]$ 
	\State  $\omega^{-} \leftarrow \zeta \omega^{-} + (1-\zeta) \omega$
 \EndFor
 \end{algorithmic}
 \end{small}
	\end{algorithm}

\section{Experimental Details}

\subsection{Network Architecture}

\begin{itemize}
    \item 
    
    For the backward representation network $B(s, z)$, we first
    preprocess separately $s$ and $(s, z)$. For the preprocessing of $s$,
    we use a feedforward neural network with one single hidden layer of
    256 units. For $(s, z)$ preprocessing, we use a masked network with one single hidden dimension of 256 units. The masked network employs multiplicative binary masks to remove some connections, such that each output layer unit of an autoregressive block is only predicted from the input units of previous blocks.
    After preprocessing, we concatenate the two outputs and pass them into a two hidden layer masked network that outputs a
    $d$-dimensional embedding. 

    \item  For the forward network $F(s,a,z)$, we first preprocess
    separately $(s, a)$ and $(s, z)$ by two feedforward networks with one
    single hidden layer (with 1024 units), and a 512-dimentional output space.
    Then we concatenate these two outputs and pass it into a
    three-hidden-layer feedforward network (with 1024 units) to output a
    $d$-dimensional vector. We use an ensemble of two networks for $F$.
    
    \item For the policy network $\pi(s,z)$, we first preprocess
    separately $s$ and $(s, z)$ by two feedforward networks with one
    single hidden layer (with 1024 units) into a 512-dimentional output space.
    Then we concatenate these two outputs and pass it into another four hidden layer feedforward network (with 1024 units)
    to output a $d_{A}$-dimensional vector. Then we apply a \texttt{Tanh}
    activation as the action space is $[-1,1]^{d_{A}}$.
    \end{itemize}

For all the architectures, we apply a layer normalization and \texttt{Tanh} activation in the first layer in order to standardize the states and actions. We use $\texttt{Relu}$ for the rest of layers.

\subsection{Extended DeepMind Control Tasks}
\label{sec:newtasks}

To evaluate our new unsupervised RL algorithm across a set of diverse unseen problems, we extend the DeepMind Control suite \edo{cit} with 15 new unseen tasks as defined by the following objectives:
\begin{itemize}
\item \textbf{cheetah bounce} -- Simulated cheetah agent is rewarded for advancing while elevating its trunk and maximizing vertical velocity.
\item \textbf{cheetah march} -- Simulated cheetah agent is rewarded for advancing at a constant pace.
\item \textbf{cheetah stand} -- Simulated cheetah agent is rewarded for standing upright on its back leg.
\item \textbf{cheetah headstand} -- Simulated cheetah agent is rewarded for standing on its head while raising its back leg.
\item \textbf{quadruped bounce} -- Simulated quadruped agent is rewarded for advancing while elevating its trunk and maximizing vertical velocity.
\item \textbf{quadruped skip} -- Simulated quadruped agent is rewarded for moving in a diagonal pattern across the environment.
\item \textbf{quadruped march} -- Simulated quadruped agent is rewarded for advancing at a constant pace.
\item \textbf{quadruped trot} -- Simulated quadruped agent is rewarded for advancing while minimizing feet contact with the ground.
\item \textbf{walker flip} -- Simulated walker for performing a cartwheel, flipping its body 360 degrees.
\item \textbf{walker march} -- Simulated walker agent is rewarded for advancing at a constant pace.
\item \textbf{walker skyreach} -- Simulated walker agent is rewarded for pushing either of its legs to maximize vertical reach.
\item \textbf{walker pullup} -- Simulated walker agent is rewarded for pushing its upper trunk vertically while keeping its feet firmly grounded.
\item \textbf{humanoid dive} -- Simulated humanoid agent is rewarded for diving head-first to maximize vertical velocity.
\item \textbf{humanoid march} -- Simulated humanoid agent is rewarded for advancing at a constant pace.
\item \textbf{humanoid skip} -- Simulated humanoid agent is rewarded for moving in a diagonal pattern across the environment.
\end{itemize}
We hope this new set of problems might facilitate the evaluation of simulated robotics agents for the broader RL field, even beyond the unsupervised setting.

\section{Full Tables of Results}
\label{sec:full_results}

\subsection{Jaco Arm Results}

\begin{table}[h]
    \centering
    \resizebox{1\columnwidth}{!}{
\begin{tabular}{|l|c|c|c|c|c|c|}
\hline
 Domain   & Task               & FB             & FB-AW             & FB-AWARE (4)         & FB-AWARE (8)      & LAP-AW        \\
\hline
 jaco     & reach\_bottom\_left  & 49.0\tiny{±25.5} & 43.9\tiny{±8.6}  & 56.3\tiny{±8.6}  & 63.6\tiny{±6.4}  & 25.9\tiny{±5.9}  \\
 jaco     & reach\_bottom\_right & 30.8\tiny{±7.5}  & 71.5\tiny{±18.2} & 57.6\tiny{±16.5} & 58.6\tiny{±21.1} & 34.0\tiny{±13.9} \\
 jaco     & reach\_random1      & 18.0\tiny{±8.0}  & 42.9\tiny{±15.7} & 64.4\tiny{±17.0} & 63.9\tiny{±10.7} & 20.4\tiny{±12.6} \\
 jaco     & reach\_random2      & 23.4\tiny{±6.4}  & 55.5\tiny{±5.6}  & 72.8\tiny{±10.7} & 63.7\tiny{±8.1}  & 14.3\tiny{±5.1}  \\
 jaco     & reach\_random3      & 43.2\tiny{±27.7} & 39.6\tiny{±5.9}  & 53.1\tiny{±6.4}  & 59.0\tiny{±12.1} & 14.6\tiny{±5.6}  \\
 jaco     & reach\_random4      & 32.6\tiny{±23.3} & 57.4\tiny{±11.5} & 68.4\tiny{±11.0} & 69.9\tiny{±10.3} & 24.1\tiny{±2.8}  \\
 jaco     & reach\_top\_left     & 32.6\tiny{±12.3} & 41.0\tiny{±5.4}  & 41.9\tiny{±8.3}  & 62.7\tiny{±14.9} & 10.3\tiny{±2.2}  \\
 jaco     & reach\_top\_right    & 21.5\tiny{±11.6} & 25.9\tiny{±9.2}  & 43.6\tiny{±9.7}  & 48.3\tiny{±12.1} & 21.4\tiny{±5.2}  \\
 \hline \hline
 jaco     & Average                & 31.4\tiny{±15.3} & 47.2\tiny{±10.0} & 57.3\tiny{±11.0} & 61.2\tiny{±12.0} & 20.6\tiny{±6.7}  \\
\hline
\end{tabular}
}
\caption{JACO results on RND dataset, with dimension $d=64$}
    \label{tab:jaco_rnd_64}
\end{table}

\begin{table}[]
    \centering
    \resizebox{1\columnwidth}{!}{
\begin{tabular}{|l|l|l|l|l|l|l|}
\hline
 Domain   & Task                & FB             & FB-AW             & FB-AWARE (4)         & FB-AWARE (8)      & LAP-AW            \\
\hline
 jaco     & reach\_bottom\_left  & 33.8\tiny{±17.3} & 76.0\tiny{±12.0} & 88.1\tiny{±18.5}  & 64.6\tiny{±13.4} & 41.3\tiny{±10.2} \\
 jaco     & reach\_bottom\_right & 51.3\tiny{±10.2} & 86.3\tiny{±9.4}  & 87.7\tiny{±15.0}  & 96.8\tiny{±6.9}  & 47.8\tiny{±18.1} \\
 jaco     & reach\_random1      & 32.6\tiny{±18.1} & 75.3\tiny{±10.5} & 85.4\tiny{±9.2}   & 87.0\tiny{±13.0} & 30.9\tiny{±5.3}  \\
 jaco     & reach\_random2      & 22.9\tiny{±10.0} & 86.3\tiny{±9.1}  & 104.1\tiny{±7.5}  & 95.5\tiny{±12.7} & 28.8\tiny{±6.3}  \\
 jaco     & reach\_random3      & 31.2\tiny{±9.0}  & 68.3\tiny{±11.3} & 89.5\tiny{±17.7}  & 61.9\tiny{±8.2}  & 24.7\tiny{±7.2}  \\
 jaco     & reach\_random4      & 21.6\tiny{±6.3}  & 82.2\tiny{±9.6}  & 101.2\tiny{±17.6} & 82.9\tiny{±10.5} & 34.7\tiny{±10.9} \\
 jaco     & reach\_top\_left     & 44.4\tiny{±16.6} & 59.5\tiny{±18.3} & 56.5\tiny{±9.6}   & 46.0\tiny{±17.7} & 32.1\tiny{±10.2} \\
 jaco     & reach\_top\_right    & 28.3\tiny{±13.0} & 44.2\tiny{±12.5} & 47.5\tiny{±5.6}   & 39.7\tiny{±8.4}  & 23.1\tiny{±7.1}  \\
 \hline \hline
 jaco     & Average                & 33.3\tiny{±12.6} & 72.2\tiny{±11.6} & 82.5\tiny{±12.6}  & 71.8\tiny{±11.4} & 32.9\tiny{±9.4}  \\
\hline
\end{tabular}
}
\caption{JACO results on RND dataset, with dimension $d=128$}
    \label{tab:jaco_rnd_128}
\end{table}

\begin{table}[]
    \centering
    \resizebox{.8\columnwidth}{!}{
\begin{tabular}{|l|l|l|l|l|l|l|}
\hline
 Domain   & Task                & FB             & FB-AW             & FB-AWARE (4)         & FB-AWARE (8)      & LAP-AW            \\
\hline
 jaco     & reach\_bottom\_left  & 33.1\tiny{±23.3} & 60.2\tiny{±26.8} & 101.0\tiny{±7.5} & 91.5\tiny{±16.8} & 42.7\tiny{±15.0} \\
 jaco     & reach\_bottom\_right & 10.5\tiny{±2.4}  & 69.2\tiny{±32.5} & 116.9\tiny{±9.7} & 90.9\tiny{±12.9} & 63.9\tiny{±13.8} \\
 jaco     & reach\_random1      & 31.9\tiny{±21.8} & 49.5\tiny{±25.5} & 86.0\tiny{±19.2} & 84.7\tiny{±12.4} & 45.4\tiny{±4.8}  \\
 jaco     & reach\_random2      & 32.1\tiny{±16.4} & 50.9\tiny{±26.2} & 99.7\tiny{±17.6} & 94.9\tiny{±14.5} & 47.2\tiny{±8.4}  \\
 jaco     & reach\_random3      & 43.6\tiny{±18.4} & 39.6\tiny{±22.4} & 72.4\tiny{±13.6} & 81.5\tiny{±9.9}  & 44.9\tiny{±9.4}  \\
 jaco     & reach\_random4      & 32.8\tiny{±15.4} & 58.3\tiny{±34.5} & 98.8\tiny{±15.7} & 93.1\tiny{±14.2} & 51.3\tiny{±8.2}  \\
 jaco     & reach\_top\_left     & 29.4\tiny{±11.2} & 38.4\tiny{±18.6} & 43.6\tiny{±10.3} & 65.1\tiny{±21.1} & 43.0\tiny{±6.1}  \\
 jaco     & reach\_top\_right    & 27.7\tiny{±6.2}  & 29.8\tiny{±19.2} & 44.8\tiny{±20.1} & 45.4\tiny{±13.8} & 37.0\tiny{±9.4}  \\
 \hline \hline
 jaco     & Average                & 30.1\tiny{±14.4} & 49.5\tiny{±25.7} & 82.9\tiny{±14.2} & 80.9\tiny{±14.4} & 46.9\tiny{±9.4}  \\
\hline
\end{tabular}
}
\caption{JACO results on RND dataset, with dimension $d=256$}
    \label{tab:jaco_rnd_256}
\end{table}

\subsection{DMC Locomotion Results}

\begin{table}[]
    \centering
    \resizebox{1\columnwidth}{!}{
    \begin{tabular}{|l|l|l|l|l|l|l|l|}
\hline
  Domain    & Task           & LAP&LAP-AW& FB-AW               & FB               & FB-ARE (4)           & FB-ARE (8)                \\
\hline
 cheetah   & walk           & 641.0\tiny{±137.7}&528.9\tiny{±22.9}& 520.8\tiny{±56.6}  & 780.3\tiny{±182.7} & 737.5\tiny{±204.7} & 686.1\tiny{±44.8}  \\
 cheetah   & run            & 156.5\tiny{±36.8}&116.0\tiny{±6.8}& 141.2\tiny{±19.8}  & 306.8\tiny{±87.3}  & 261.1\tiny{±95.3}  & 241.8\tiny{±57.3}  \\
 cheetah   & walk\_backward  & 930.9\tiny{±77.9}&452.3\tiny{±212.6}& 839.5\tiny{±49.3}  & 732.5\tiny{±167.3} & 769.9\tiny{±209.9} & 762.3\tiny{±205.6} \\
 cheetah   & run\_backward   & 230.7\tiny{±42.6}&109.7\tiny{±37.2}& 196.2\tiny{±22.4}  & 136.5\tiny{±35.6}  & 174.1\tiny{±48.6}  & 186.4\tiny{±66.4}  \\
 \hline \hline
 cheetah   & in\_dataset\_avg      & 489.8\tiny{±73.8}&301.7\tiny{±69.9}& 424.4\tiny{±37.0}  & 489.0\tiny{±118.2} & 485.7\tiny{±139.6} & 469.2\tiny{±93.5}  \\
 \hline \hline
 quadruped & walk           & 509.1\tiny{±38.9}&418.3\tiny{±42.7}& 438.2\tiny{±192.6} & 608.4\tiny{±72.0}  & 630.1\tiny{±96.9}  & 604.0\tiny{±116.8} \\
 quadruped & run            & 457.6\tiny{±27.7}&355.6\tiny{±86.5}& 391.2\tiny{±91.9}  & 392.7\tiny{±31.4}  & 417.4\tiny{±30.6}  & 376.1\tiny{±29.2}  \\
 quadruped & stand          & 681.6\tiny{±221.6}&731.3\tiny{±166.6}& 762.2\tiny{±152.8} & 687.9\tiny{±29.6}  & 761.7\tiny{±75.9}  & 705.4\tiny{±58.1}  \\
 quadruped & jump           & 464.5\tiny{±167.3}&493.4\tiny{±147.1}& 563.7\tiny{±139.1} & 567.0\tiny{±10.6}  & 609.2\tiny{±42.3}  & 580.3\tiny{±37.1}  \\
 \hline \hline
 quadruped & in\_dataset\_avg      & 528.2\tiny{±113.9}&499.6\tiny{±110.7}& 538.9\tiny{±144.1} & 564.0\tiny{±35.9}  & 604.6\tiny{±61.4}  & 566.4\tiny{±60.3}  \\
 \hline \hline
 walker    & stand          & 963.6\tiny{±15.3}&803.3\tiny{±61.9}& 452.6\tiny{±85.8}  & 728.5\tiny{±83.0}  & 632.7\tiny{±151.7} & 516.1\tiny{±191.0} \\
 walker    & walk           & 908.8\tiny{±28.1}&605.3\tiny{±36.4}& 572.0\tiny{±25.3}  & 669.9\tiny{±46.6}  & 607.9\tiny{±140.2} & 552.2\tiny{±268.4} \\
 walker    & run            & 318.7\tiny{±15.0}&196.6\tiny{±13.1}& 181.2\tiny{±16.5}  & 356.2\tiny{±20.9}  & 290.4\tiny{±22.7}  & 240.0\tiny{±122.7} \\
 walker    & spin           & 982.9\tiny{±3.5}&627.9\tiny{±135.1}& 963.7\tiny{±5.3}   & 974.9\tiny{±10.0}  & 983.4\tiny{±1.3}   & 788.2\tiny{±391.2} \\
 \hline \hline
 walker    & in\_dataset\_avg      & 793.5\tiny{±15.5}&558.3\tiny{±61.6}& 542.4\tiny{±33.2}  & 682.4\tiny{±40.1}  & 628.6\tiny{±79.0}  & 524.1\tiny{±243.3} \\
 \hline \hline
 cheetah   & bounce         & 600.4\tiny{±23.0}&428.2\tiny{±210.5}& 539.9\tiny{±25.8}  & 415.7\tiny{±119.2} & 472.8\tiny{±44.2}  & 462.2\tiny{±23.8}  \\
 cheetah   & march          & 290.8\tiny{±63.2}&233.6\tiny{±12.9}& 279.2\tiny{±39.0}  & 561.4\tiny{±183.7} & 531.5\tiny{±187.6} & 460.9\tiny{±119.2} \\
 cheetah   & stand          & 790.1\tiny{±107.9}&249.5\tiny{±131.6}& 738.7\tiny{±66.2}  & 780.9\tiny{±105.7} & 629.3\tiny{±49.3}  & 762.9\tiny{±179.9} \\
 cheetah   & headstand      & 577.9\tiny{±145.1}&288.9\tiny{±236.8}& 728.4\tiny{±83.3}  & 794.7\tiny{±9.4}   & 791.1\tiny{±52.0}  & 765.1\tiny{±70.3}  \\
 \hline \hline
 cheetah   & out\_of\_dataset\_avg       & 564.8\tiny{±84.8}&300.0\tiny{±147.9}& 571.6\tiny{±53.6}  & 638.2\tiny{±104.5} & 606.2\tiny{±83.3}  & 612.8\tiny{±98.3}  \\
 \hline \hline
 quadruped & bounce         & 179.5\tiny{±76.1}&123.2\tiny{±64.4}& 189.3\tiny{±192.0} & 276.1\tiny{±57.1}  & 196.0\tiny{±105.6} & 251.1\tiny{±47.9}  \\
 quadruped & skip           & 365.3\tiny{±108.4}&458.0\tiny{±114.5}& 559.5\tiny{±220.3} & 603.3\tiny{±30.7}  & 635.2\tiny{±34.4}  & 615.3\tiny{±35.2}  \\
 quadruped & march          & 478.7\tiny{±14.1}&370.4\tiny{±80.7}& 396.4\tiny{±123.7} & 458.3\tiny{±20.4}  & 466.0\tiny{±38.7}  & 419.7\tiny{±32.0}  \\
 quadruped & trot           & 310.6\tiny{±7.0}&246.7\tiny{±54.6}& 278.3\tiny{±122.4} & 357.6\tiny{±12.1}  & 380.9\tiny{±47.6}  & 335.3\tiny{±35.4}  \\
 \hline \hline
 quadruped & out\_of\_dataset\_avg       & 333.5\tiny{±51.4}&299.6\tiny{±78.5}& 355.9\tiny{±164.6} & 423.8\tiny{±30.1}  & 419.5\tiny{±56.6}  & 405.4\tiny{±37.6}  \\
 \hline \hline
 walker    & flip           & 605.4\tiny{±42.4}&435.0\tiny{±27.7}& 293.6\tiny{±83.8}  & 445.5\tiny{±77.4}  & 462.0\tiny{±68.7}  & 322.6\tiny{±147.7} \\
 walker    & march          & 695.8\tiny{±47.8}&364.3\tiny{±29.0}& 359.9\tiny{±15.2}  & 518.4\tiny{±95.9}  & 400.5\tiny{±178.0} & 390.3\tiny{±190.4} \\
 walker    & skyreach       & 653.8\tiny{±64.1}&423.1\tiny{±61.6}& 406.0\tiny{±13.8}  & 417.0\tiny{±34.9}  & 331.7\tiny{±44.4}  & 261.5\tiny{±151.7} \\
 walker    & pullup         & 264.8\tiny{±80.5}&58.4\tiny{±39.2}& 264.5\tiny{±60.5}  & 305.9\tiny{±109.4} & 463.1\tiny{±107.5} & 214.6\tiny{±155.1} \\
 \hline \hline
 walker    & out\_of\_dataset\_avg       & 554.9\tiny{±58.7}&320.2\tiny{±39.4}& 331.0\tiny{±43.3}  & 421.7\tiny{±79.4}  & 414.3\tiny{±99.7}  & 297.2\tiny{±161.2} \\
\hline
\end{tabular}
}
    \caption{DMC Locomotion results on RND dataset, with dimension 64,
    averaged over 100 episodes. Humanoid is not included, as RND produces insufficient
    exploration for Humanoid: even classical single-task (non-FB) training fails.
    }

    \label{tab:dmc_rnd}
\end{table}

\begin{table}[H]
    \centering
    \resizebox{1\columnwidth}{!}{
    \begin{tabular}{|l|l|l|l|l|l|l|}
\hline
 Domain    & Task          & FB               & FB-AW               & FB-AWARE (4)           & FB-AWARE (8)          & LAP-AW                \\
\hline
 cheetah   & walk          & 985.3\tiny{±3.1}   & 983.6\tiny{±6.6}   & 967.5\tiny{±28.4}  & 982.0\tiny{±4.2}  & 978.5\tiny{±14.0}  \\
 cheetah   & run           & 213.2\tiny{±123.5} & 560.2\tiny{±40.7}  & 525.7\tiny{±53.6}  & 547.6\tiny{±20.5} & 448.9\tiny{±222.9} \\
 cheetah   & walk\_backward & 971.0\tiny{±24.3}  & 979.7\tiny{±5.1}   & 984.1\tiny{±0.7}   & 985.1\tiny{±1.2}  & 982.8\tiny{±3.5}   \\
 cheetah   & run\_backward  & 302.5\tiny{±58.9}  & 473.9\tiny{±6.0}   & 454.0\tiny{±17.9}  & 465.8\tiny{±9.9}  & 413.8\tiny{±25.1}  \\
 \hline \hline
 cheetah   & in\_dataset\_avg     & 618.0\tiny{±52.4}  & 749.4\tiny{±14.6}  & 732.8\tiny{±25.1}  & 745.1\tiny{±8.9}  & 706.0\tiny{±66.4}  \\
 \hline \hline 
 quadruped & walk          & 389.5\tiny{±238.2} & 935.5\tiny{±6.6}   & 926.8\tiny{±4.2}   & 919.3\tiny{±8.4}  & 819.1\tiny{±132.3} \\
 quadruped & run           & 298.1\tiny{±105.4} & 580.8\tiny{±62.5}  & 606.2\tiny{±33.0}  & 566.0\tiny{±47.8} & 610.6\tiny{±89.3}  \\
 quadruped & stand         & 615.4\tiny{±191.0} & 941.1\tiny{±6.6}   & 947.9\tiny{±4.7}   & 940.3\tiny{±9.4}  & 911.5\tiny{±27.9}  \\
 quadruped & jump          & 429.6\tiny{±134.5} & 779.1\tiny{±48.1}  & 841.8\tiny{±8.2}   & 751.0\tiny{±75.9} & 782.8\tiny{±48.4}  \\
 \hline \hline 
 quadruped & in\_dataset\_avg     & 433.2\tiny{±167.3} & 809.1\tiny{±30.9}  & 830.7\tiny{±12.5}  & 794.2\tiny{±35.4} & 781.0\tiny{±74.5}  \\
 \hline \hline
 walker    & stand         & 744.0\tiny{±119.3} & 962.2\tiny{±14.7}  & 963.9\tiny{±3.7}   & 963.4\tiny{±4.3}  & 961.2\tiny{±8.2}   \\
 walker    & walk          & 780.0\tiny{±310.8} & 943.8\tiny{±20.8}  & 941.4\tiny{±7.4}   & 922.4\tiny{±16.4} & 934.3\tiny{±12.7}  \\
 walker    & run           & 422.5\tiny{±167.4} & 594.9\tiny{±12.1}  & 606.5\tiny{±6.2}   & 600.8\tiny{±37.8} & 518.9\tiny{±39.7}  \\
 walker    & spin          & 481.6\tiny{±226.3} & 894.7\tiny{±84.4}  & 820.9\tiny{±114.6} & 894.8\tiny{±63.6} & 802.0\tiny{±180.7} \\
 \hline \hline 
 walker    & in\_dataset\_avg     & 607.0\tiny{±205.9} & 848.9\tiny{±33.0}  & 833.2\tiny{±33.0}  & 845.3\tiny{±30.5} & 804.1\tiny{±60.4}  \\
 \hline \hline 
 humanoid  & walk          & 9.5\tiny{±11.8}    & 793.5\tiny{±16.1}  & 789.4\tiny{±18.0}  & 791.5\tiny{±7.3}  & 715.4\tiny{±35.1}  \\
 humanoid  & stand         & 7.7\tiny{±3.8}     & 720.0\tiny{±23.5}  & 728.3\tiny{±30.1}  & 711.6\tiny{±24.4} & 587.2\tiny{±34.0}  \\
 humanoid  & run           & 2.4\tiny{±1.6}     & 276.5\tiny{±11.0}  & 266.7\tiny{±4.8}   & 273.6\tiny{±5.2}  & 246.4\tiny{±9.7}   \\
 \hline \hline 
 humanoid  & in\_dataset\_avg     & 6.5\tiny{±5.7}     & 596.7\tiny{±16.9}  & 594.8\tiny{±17.6}  & 592.3\tiny{±12.3} & 516.3\tiny{±26.3}  \\
 \hline \hline
 cheetah   & bounce        & 351.8\tiny{±119.9} & 479.0\tiny{±22.3}  & 506.9\tiny{±18.9}  & 494.6\tiny{±9.4}  & 338.4\tiny{±38.3}  \\
 cheetah   & march         & 521.4\tiny{±161.8} & 897.8\tiny{±38.1}  & 903.6\tiny{±30.6}  & 921.9\tiny{±8.9}  & 819.7\tiny{±60.8}  \\
 cheetah   & stand         & 731.5\tiny{±248.4} & 419.1\tiny{±48.7}  & 472.7\tiny{±49.4}  & 548.7\tiny{±37.7} & 184.1\tiny{±44.9}  \\
 cheetah   & headstand     & 560.4\tiny{±185.9} & 849.7\tiny{±49.9}  & 848.0\tiny{±33.8}  & 806.0\tiny{±21.4} & 7.2\tiny{±11.4}    \\
 \hline \hline
 cheetah   & out\_of\_dataset\_avg      & 541.3\tiny{±179.0} & 661.4\tiny{±39.8}  & 682.8\tiny{±33.2}  & 692.8\tiny{±19.4} & 337.3\tiny{±38.9}  \\
 \hline \hline
 quadruped & bounce        & 114.7\tiny{±98.1}  & 181.2\tiny{±61.4}  & 284.2\tiny{±31.7}  & 223.7\tiny{±20.6} & 202.8\tiny{±52.1}  \\
 quadruped & skip          & 425.0\tiny{±139.2} & 654.6\tiny{±88.0}  & 835.2\tiny{±86.9}  & 705.4\tiny{±67.8} & 769.2\tiny{±94.6}  \\
 quadruped & march         & 304.1\tiny{±133.2} & 747.1\tiny{±94.2}  & 791.8\tiny{±30.6}  & 800.0\tiny{±7.9}  & 742.5\tiny{±123.0} \\
 quadruped & trot          & 228.3\tiny{±127.1} & 573.0\tiny{±46.9}  & 614.8\tiny{±15.9}  & 594.6\tiny{±6.1}  & 509.8\tiny{±73.1}  \\
 \hline \hline
 quadruped & out\_of\_dataset\_avg      & 268.0\tiny{±124.4} & 539.0\tiny{±72.6}  & 631.5\tiny{±41.3}  & 580.9\tiny{±25.6} & 556.1\tiny{±85.7}  \\
 \hline \hline
 walker    & flip          & 404.7\tiny{±234.4} & 909.9\tiny{±20.0}  & 913.6\tiny{±14.5}  & 914.5\tiny{±18.3} & 780.2\tiny{±66.2}  \\
 walker    & march         & 663.3\tiny{±245.6} & 826.7\tiny{±41.3}  & 841.0\tiny{±45.1}  & 811.8\tiny{±17.7} & 725.8\tiny{±25.7}  \\
 walker    & skyreach      & 396.5\tiny{±35.2}  & 365.2\tiny{±42.7}  & 366.7\tiny{±25.2}  & 404.3\tiny{±51.5} & 284.6\tiny{±52.7}  \\
 walker    & pullup        & 124.1\tiny{±101.7} & 334.4\tiny{±112.7} & 523.7\tiny{±31.4}  & 454.8\tiny{±46.9} & 113.9\tiny{±48.8}  \\
 \hline \hline
 walker    & out\_of\_dataset\_avg      & 397.2\tiny{±154.2} & 609.0\tiny{±54.2}  & 661.3\tiny{±29.1}  & 646.4\tiny{±33.6} & 476.1\tiny{±48.3}  \\
 \hline \hline
 humanoid  & dive          & 165.8\tiny{±7.9}   & 404.1\tiny{±11.1}  & 409.8\tiny{±17.9}  & 396.5\tiny{±16.1} & 242.6\tiny{±18.1}  \\
 humanoid  & march         & 6.3\tiny{±8.4}     & 669.6\tiny{±23.5}  & 659.5\tiny{±23.6}  & 661.2\tiny{±18.3} & 559.6\tiny{±53.1}  \\
 humanoid  & skip          & 2.1\tiny{±0.9}     & 233.7\tiny{±39.2}  & 234.9\tiny{±11.4}  & 221.4\tiny{±15.1} & 126.0\tiny{±18.2}  \\
 \hline \hline
 humanoid  & out\_of\_dataset\_avg      & 58.1\tiny{±5.7}    & 435.8\tiny{±24.6}  & 434.7\tiny{±17.6}  & 426.4\tiny{±16.5} & 309.4\tiny{±29.8}  \\
\hline
\end{tabular}
}
    \caption{DMC locomotion results on MOOD dataset, with dimension = 64 for walker,cheetah, quadruped, and dimension = 128 for humanoid, averaged over 100 episodes}
    \label{tab:dmc_mood}
\end{table}

\subsection{Additional reward prompts}

We demonstrate the adaptability of our FB-AWARE model on the DMC humanoid by showcasing its behavior in response to various reward functions. In Figure~\ref{fig:eq.frames}, we illustrate the agent's actions when prompted by the following reward functions:
\begin{itemize}

\item $\textsc{Left\_Hand}$: the task consists in raising the left hand while standing. Specifically, the reward function is defined as having a velocity close to zero (exponential term), having an upright torso, and maintaining the height of the left wrist above a certain threshold while keeping the height of the right wrist below a different threshold.

\begin{equation*}
\mathrm{R}_{\textsc{Left\_Hand}} = \exp( -(v_x^2 + v_y^2) ) * \mathrm{upright} * \mathbb{I}\{\mathrm{left\_wrist\_z} > 2\} * \mathbb{I}\{\mathrm{right\_wrist\_z} < 0.9 \}
\end{equation*}

\item $\textsc{Right\_Hand}$: the task consists in raising the right hand while standing. Specifically, the reward function is defined as having a velocity close to zero (exponential term), having an upright torso, and maintaining the height of the right wrist above a certain threshold while keeping the height of the left wrist below a different threshold.

\begin{equation*}
\mathrm{R}_{\textsc{Right\_Hand}} = \exp( -(v_x^2 + v_y^2) ) * \mathrm{upright} * \mathbb{I}\{\mathrm{left\_wrist\_z} < 0.9\} * \mathbb{I}\{\mathrm{right\_wrist\_z} > 0.9 \}
\end{equation*}

\item $\textsc{Walk\_Open\_Hand}$: the task consists in walking while keeping the two hands open. The reward function is defined as having a velocity above some threshold and the absolute distance between the $y$ coordinate of the left and right wrist above some threshold.

\begin{equation*}
\mathrm{R}_{\textsc{Walk\_Open\_Hand}} = \mathbb{I}\{v_x^2 + v_y^2 > 5 \} * \mathbb{I}\{ | \mathrm{left\_wrist\_y} - \mathrm{right\_wrist\_y} | >  1.2\} 
\end{equation*}

\item $\textsc{Split}$: the task consists in doing a split on the ground. The reward can be described as having a velocity close to zero, the height of the pelvis below some threshold and the absolute distance between the $y$ coordinate of the left and right ankle above some threshold.

\begin{equation*}
\mathrm{R}_{\textsc{Split}} = \exp(-(v_x^2 + v_y^2)) 
* \mathbb{I}\{ \mathrm{z\_pelvis} < 0.2 \}
* \mathbb{I}\{ \mathrm{left\_ankle\_y} 
- \mathrm{right\_ankle\_y} | > 0.5 \}
\end{equation*}

\end{itemize}

\begin{figure}[t]
    \centering
\begin{tikzpicture}
 \node (a) at (0,0) {
\includegraphics[width=0.12\linewidth]{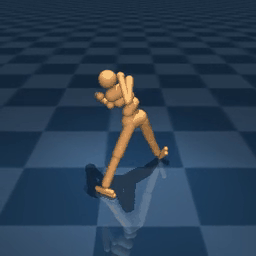}
\includegraphics[width=0.12\linewidth]{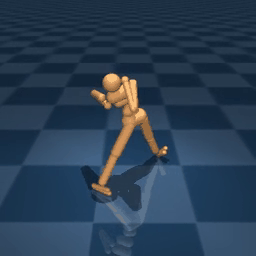}
\includegraphics[width=0.12\linewidth]{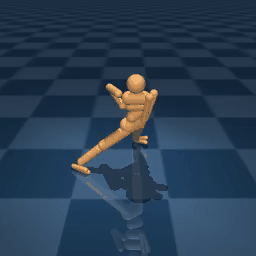}
\includegraphics[width=0.12\linewidth]{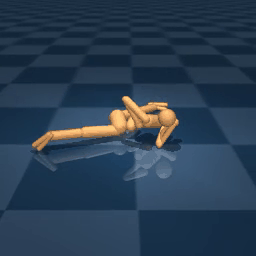}
\includegraphics[width=0.12\linewidth]{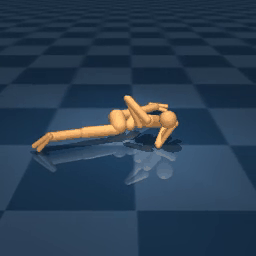}
\includegraphics[width=0.12\linewidth]{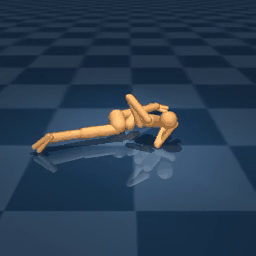}
\includegraphics[width=0.12\linewidth]{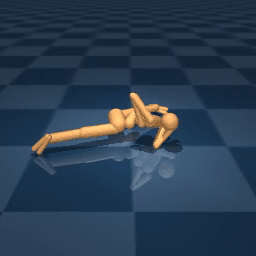}
};
 \node (b) at (0,-2) {
\includegraphics[width=0.12\linewidth]{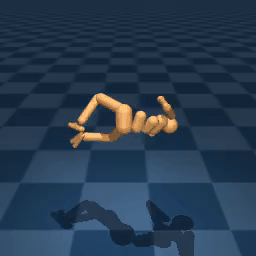}
\includegraphics[width=0.12\linewidth]{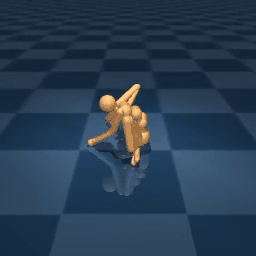}
\includegraphics[width=0.12\linewidth]{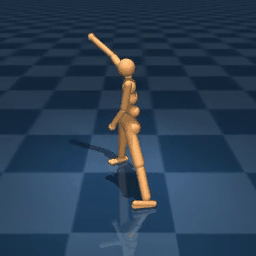}
\includegraphics[width=0.12\linewidth]{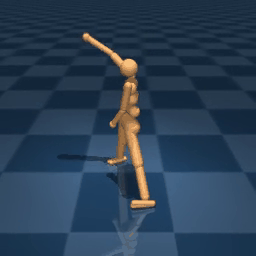}
\includegraphics[width=0.12\linewidth]{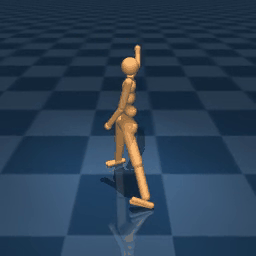}
\includegraphics[width=0.12\linewidth]{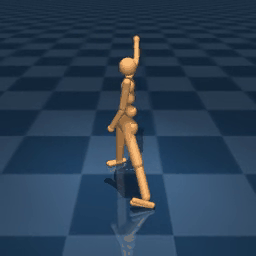}
\includegraphics[width=0.12\linewidth]{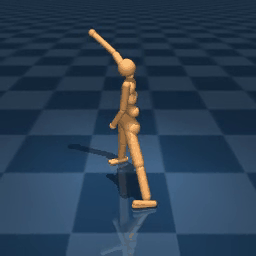}
};
 \node (c) at (0,-4) {
\includegraphics[width=0.12\linewidth]{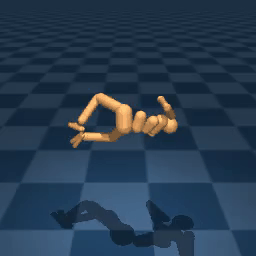}
\includegraphics[width=0.12\linewidth]{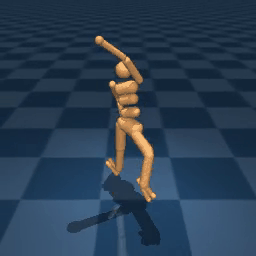}
\includegraphics[width=0.12\linewidth]{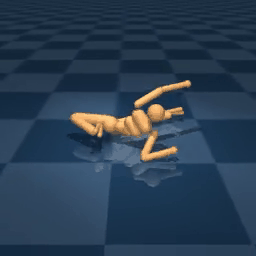}
\includegraphics[width=0.12\linewidth]{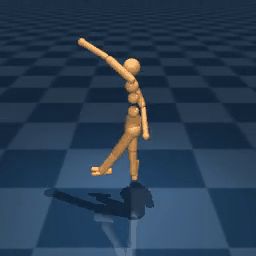}
\includegraphics[width=0.12\linewidth]{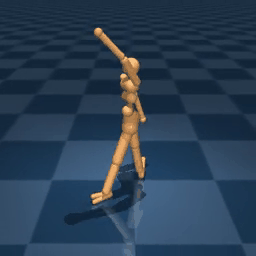}
\includegraphics[width=0.12\linewidth]{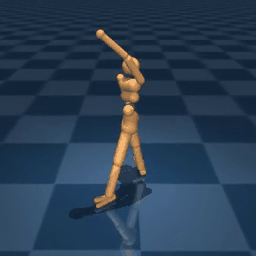}
\includegraphics[width=0.12\linewidth]{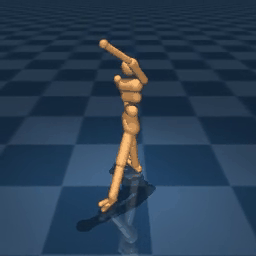}
};
 \node (d) at (0,-6) {
\includegraphics[width=0.12\linewidth]{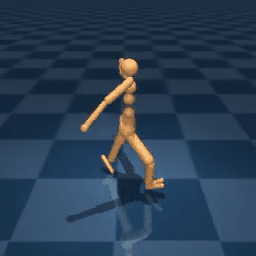}
\includegraphics[width=0.12\linewidth]{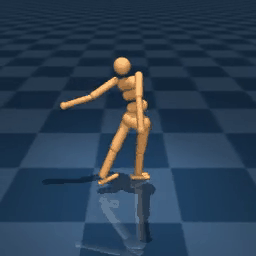}
\includegraphics[width=0.12\linewidth]{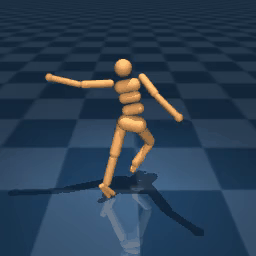}
\includegraphics[width=0.12\linewidth]{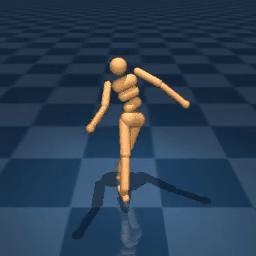}
\includegraphics[width=0.12\linewidth]{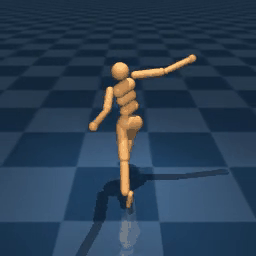}
\includegraphics[width=0.12\linewidth]{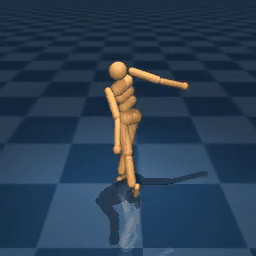}
\includegraphics[width=0.12\linewidth]{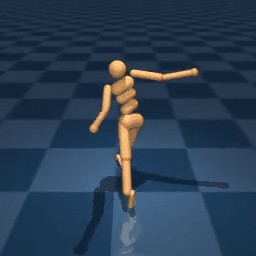}
};
  \node [left=1mm of a] {\rotatebox{90}{\scriptsize $\textsc{Split}$}};
  \node [left=1mm of b] {\rotatebox{90}{\scriptsize $\textsc{Right-Hand}$}};
  \node [left=1mm of c] {\rotatebox{90}{\scriptsize $\textsc{Left-Hand}$}};
  \node [left=1mm of d] {\rotatebox{90}{\scriptsize $\textsc{Walk-Open-Hand}$}};
  \node [above=0.5mm of a]  {\scriptsize Time $\longrightarrow$};
\end{tikzpicture}
    \caption{Example of behaviors inferred by from reward equations.}
    \label{fig:eq.frames}
\end{figure}

\section{Ablations}
\label{sec:ablations}

\begin{figure}[H]
    \centering
    \includegraphics[width=0.95\linewidth]{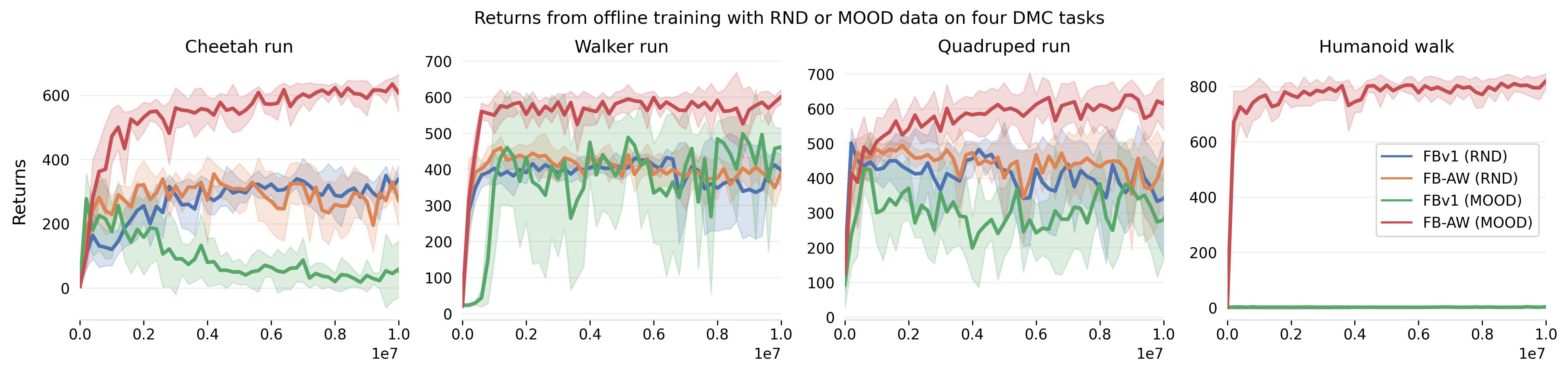}
    \caption{Performance on four representative tasks of the DMC when
    training FB-AW and vanilla FB (FBv1) either from mixed objective MOOD
    or pure RND data. The advantage of AW is clear on the mixed-objective
    MOOD datasets. The RND dataset does not allow FB to reach top
    performance. %
    }
    \label{sec4:fig:ex_perf_bias}
\end{figure}%

\begin{figure}[H]
    \centering
    \includegraphics[width=0.95\linewidth]{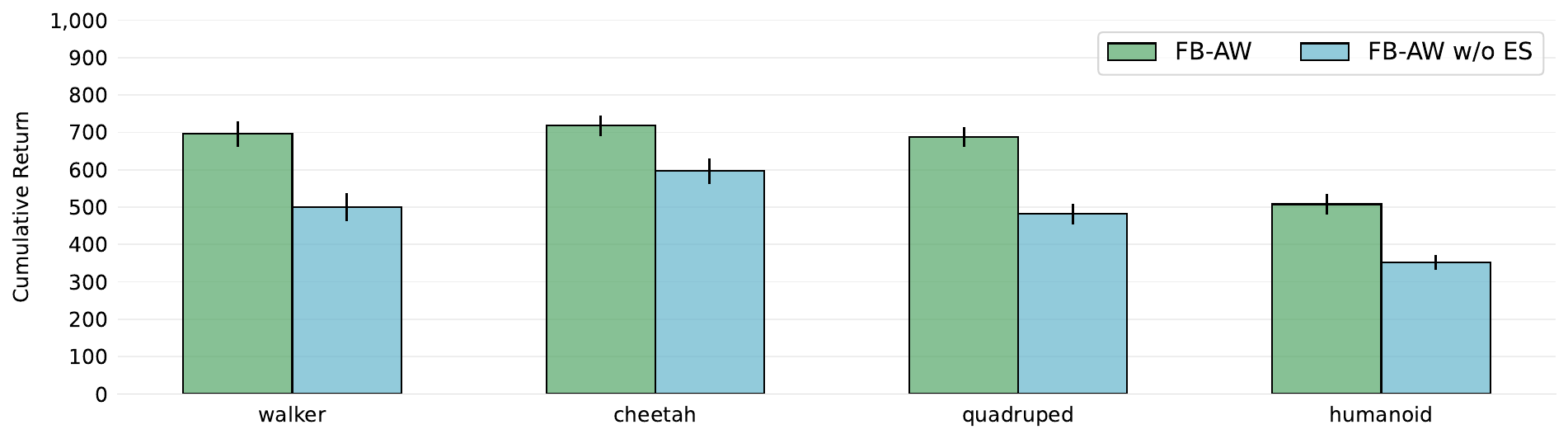}
    \caption{Cumulative reward averaged over all DMC Locomotion tasks,
    achieved by FB-AW with or without Evaluation-Sampling, trained on
    MOOD dataset.}
    \label{fig:es_abl}
\end{figure}%

\begin{figure}[H]
    \centering
    \includegraphics[width=0.95\linewidth]{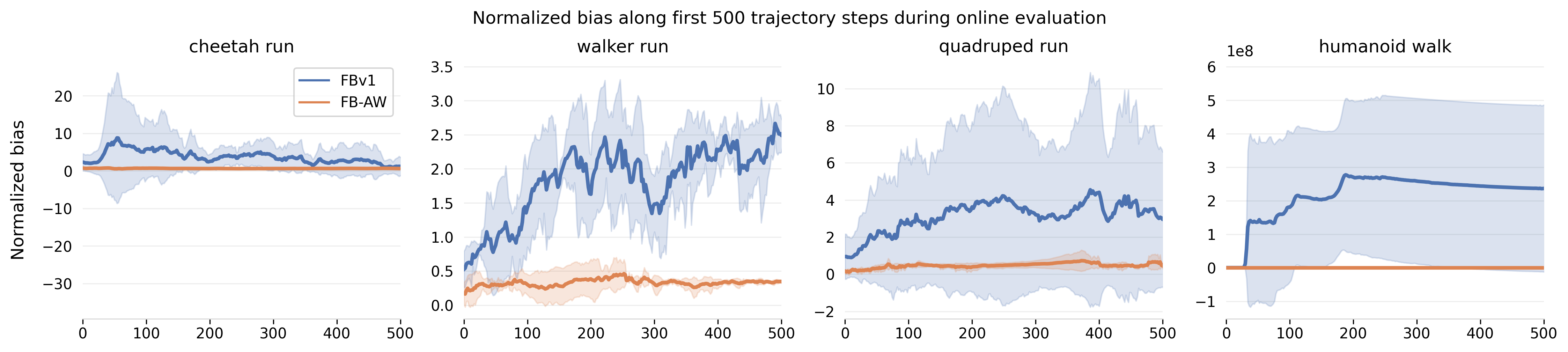}
    \caption{AW improves accuracy of reward prediction by the $B$ model.
    We plot the bias of estimated rewards when progressing through a
    trajectory, namely, the difference between the actual trajectory
    return and the return $\sum_t B(s_t)^T z$ predicted by FB, 
    after offline training on MOOD (averaged across 5 agents,
    10 trajectories each). Vanilla FB provides overoptimistic values.}
    \label{sec4:fig:ex_perf_bias}
\end{figure}%

\begin{figure}[H]
    \centering
    \includegraphics[width=0.95\linewidth]{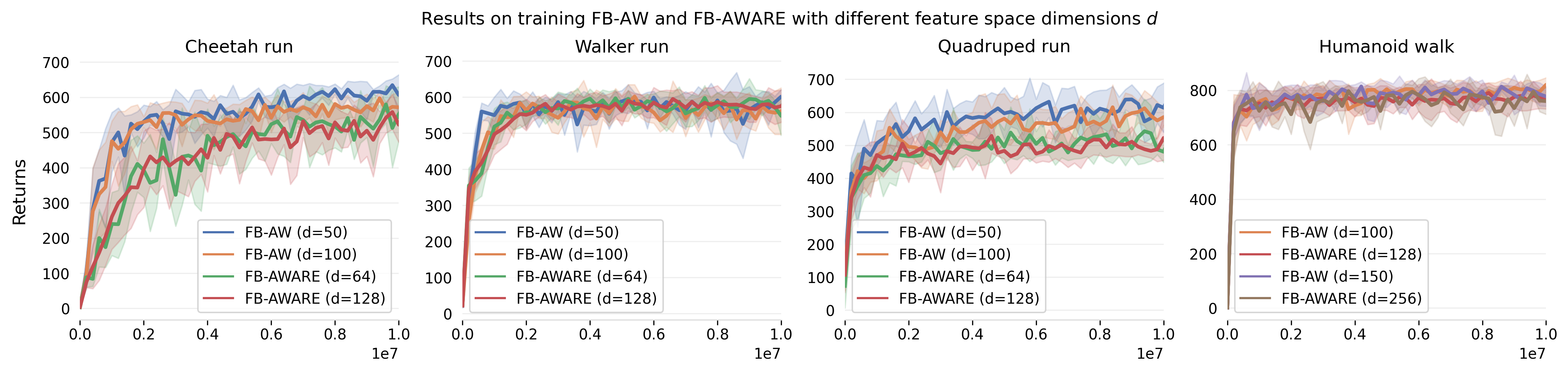}
    \caption{Effect of the $z$ dimension for FB-AW and FB-AWARE on the MOOD mixed objective datasets}
    \label{TODO}
\end{figure}%

\begin{figure}[H]
    \centering
    \includegraphics[width=0.95\linewidth]{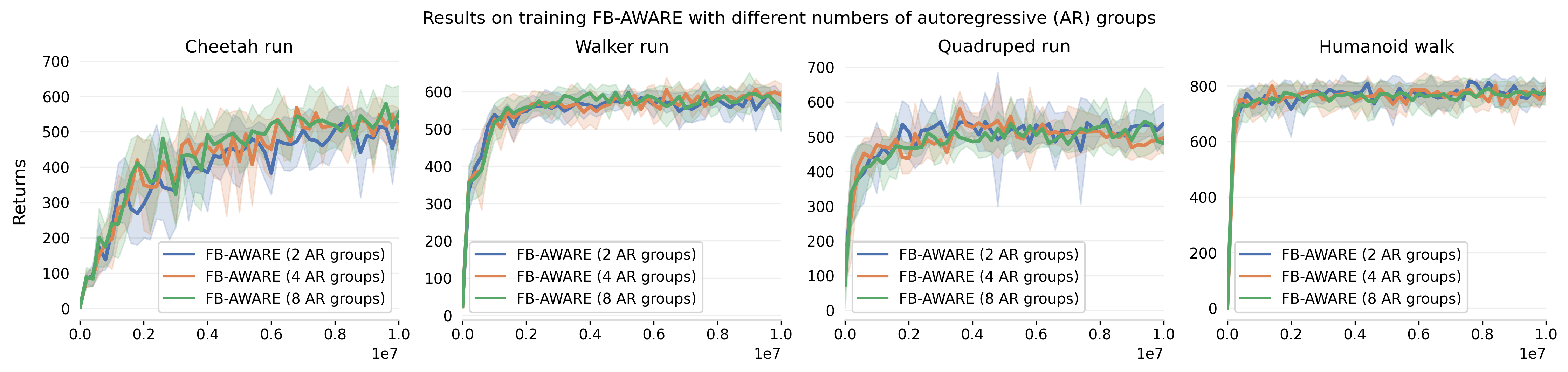}
    \caption{Effect of the number of autoregressive groups for FB-AWARE on the MOOD mixed objective datasets}
    \label{TODO}
\end{figure}%

\begin{figure}[H]
    \centering
    \includegraphics[width=0.95\linewidth]{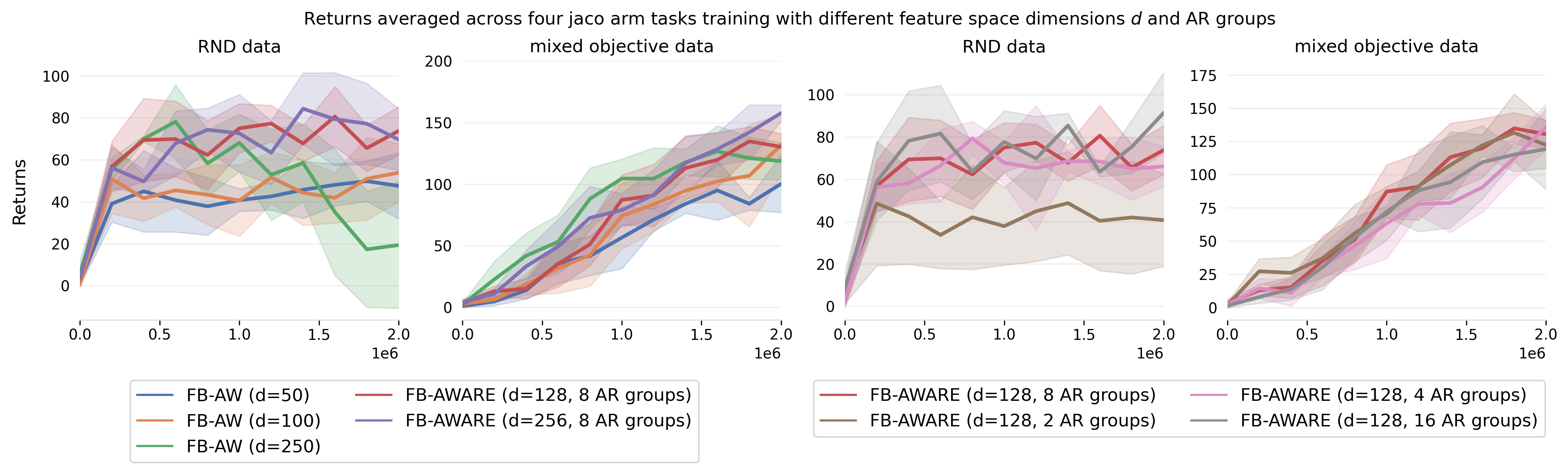}
    \caption{Effect of modifying the $z$ dimension and the number of
    autoregressive groups for FB-AW and FB-AWARE, for performance in the
    Jaco arm environment}
    \label{TODO}
\end{figure}%

\begin{figure}[H]
    \centering
    \includegraphics[width=0.95\linewidth]{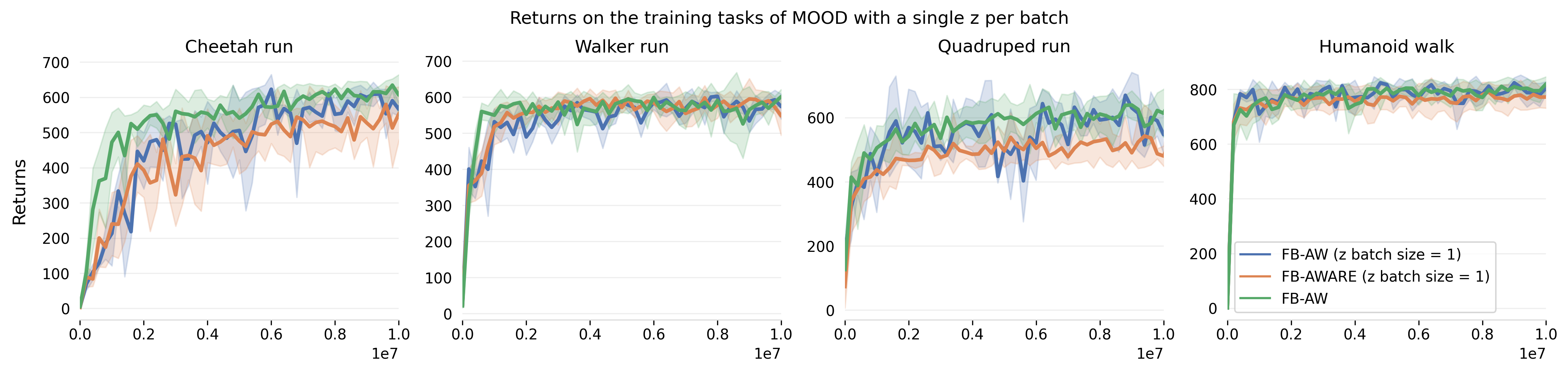}
    \caption{Effect of training FB-AW with a single $z$-per-batch like
    FB-AWARE (Section~\ref{sec:arfbalgo})}
    \label{fig:zperbatch}
\end{figure}%

\begin{figure}[H]
    \centering
    \includegraphics[width=0.95\linewidth]{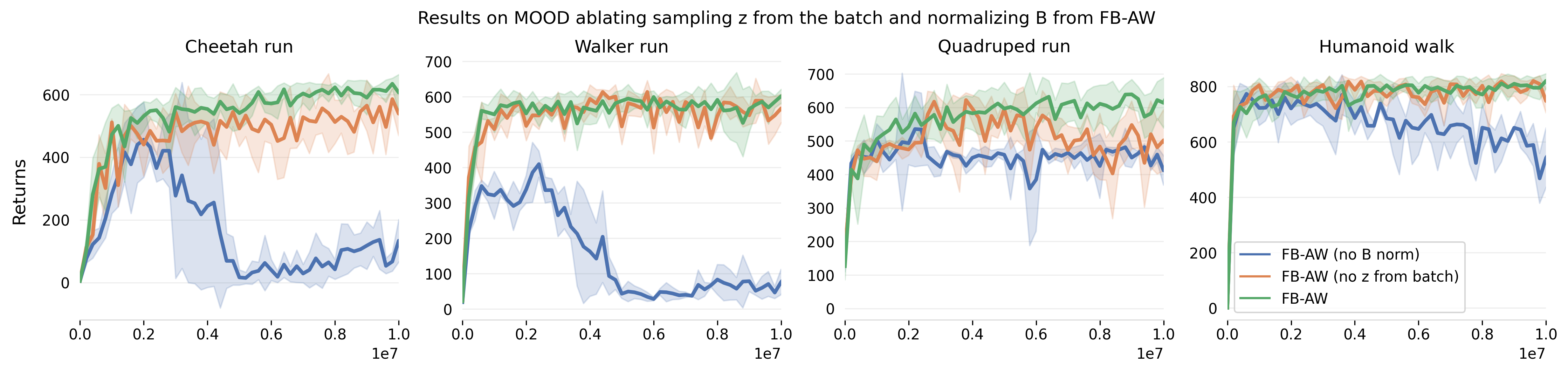}
        \includegraphics[width=0.95\linewidth]{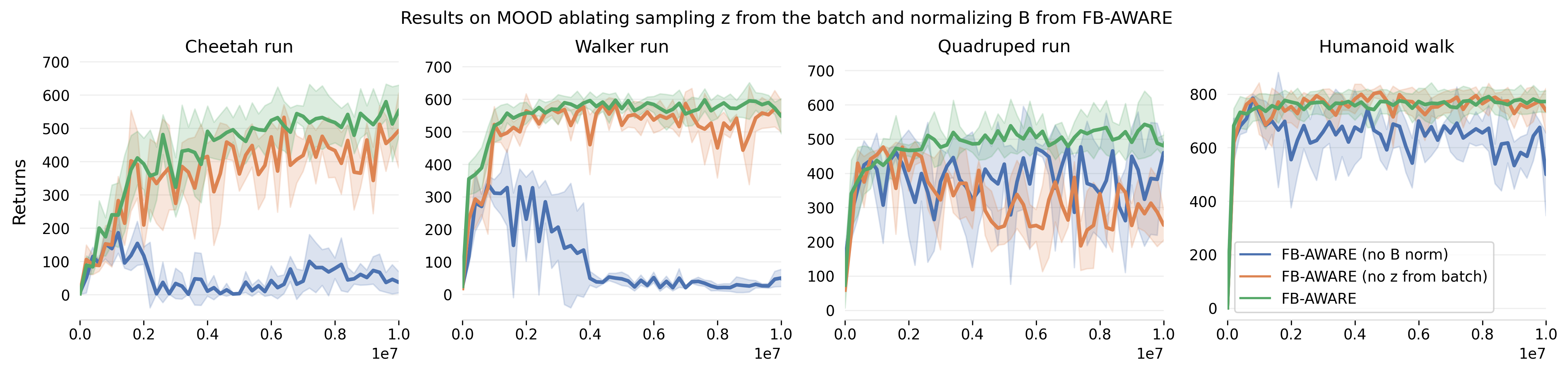}
    \caption{Ablations for Appendix~\ref{sec:arfbalgo}: Effects from
    training without the $B$ normalization \NDY{This is
    the normalization $B\gets B\sqrt{d}/\norm{B}$, right? or does this also
    remove the orthonormalization loss?} and
    without sampling 50\% of $z$ from other states in the minibatch
    for FB-AW (Top) and FB-AWARE (Bottom)}
    \label{fig:z_B_ablation}
\end{figure}%

\begin{table}[H]
\centering
\resizebox{1\columnwidth}{!}{
\begin{tabular}{|l|l|l|l|l|l|l|}
\hline
 Domain    & Task          &  Vanilla FB          &  FB \newline (fully par. + no min.)              & FB-AW \newline  (no fully par. + min.)    & FB-AW (WIS)      & FB-AW              \\
\hline
 cheetah   & walk          & 275.5\tiny{±343.5} & 500.2\tiny{±285.2} & 987.5\tiny{±0.6}    & 975.0\tiny{±23.5}  & 975.6\tiny{±24.6}  \\
 cheetah   & run           & 77.1\tiny{±87.3}   & 106.0\tiny{±50.4}  & 548.3\tiny{±38.6}   & 548.4\tiny{±30.0}  & 610.7\tiny{±49.2}  \\
 cheetah   & walk\_backward & 199.3\tiny{±209.1} & 349.7\tiny{±394.5} & 984.8\tiny{±0.4}    & 984.5\tiny{±1.2}   & 984.7\tiny{±0.4}   \\
 cheetah   & run\_backward  & 49.1\tiny{±58.6}   & 78.6\tiny{±118.3}  & 461.0\tiny{±6.6}    & 477.4\tiny{±7.8}   & 481.4\tiny{±4.2}   \\
 \hline \hline
 cheetah   & in\_dataset\_avg     & 0.175\tiny{±0.203} & 0.295\tiny{±0.249} & 0.939\tiny{±0.019}  & 0.944\tiny{±0.022} & 0.971\tiny{±0.028} \\
 \hline \hline
 quadruped & walk          & 318.3\tiny{±65.7}  & 763.9\tiny{±145.2} & 889.4\tiny{±80.5}   & 938.8\tiny{±34.7}  & 958.0\tiny{±9.1}   \\
 quadruped & run           & 279.4\tiny{±45.8}  & 439.8\tiny{±90.2}  & 446.1\tiny{±50.6}   & 599.4\tiny{±70.4}  & 673.4\tiny{±33.1}  \\
 quadruped & stand         & 619.6\tiny{±121.1} & 831.5\tiny{±147.8} & 901.3\tiny{±84.7}   & 944.9\tiny{±44.0}  & 975.9\tiny{±4.5}   \\
 quadruped & jump          & 435.0\tiny{±79.8}  & 644.1\tiny{±127.7} & 708.2\tiny{±20.2}   & 767.4\tiny{±74.9}  & 798.4\tiny{±55.0}  \\
 \hline \hline
 quadruped & in\_dataset\_avg     & 0.493\tiny{±0.093} & 0.800\tiny{±0.153} & 0.877\tiny{±0.070}  & 0.976\tiny{±0.070} & 1.026\tiny{±0.032} \\
 \hline \hline
 walker    & stand         & 906.5\tiny{±80.4}  & 976.2\tiny{±9.5}   & 964.8\tiny{±3.5}    & 978.9\tiny{±3.1}   & 956.1\tiny{±34.7}  \\
 walker    & walk          & 892.8\tiny{±102.7} & 939.9\tiny{±44.7}  & 946.7\tiny{±10.3}   & 960.1\tiny{±7.3}   & 955.4\tiny{±16.6}  \\
 walker    & run           & 462.2\tiny{±37.5}  & 487.0\tiny{±44.1}  & 480.5\tiny{±52.6}   & 583.3\tiny{±20.9}  & 579.8\tiny{±45.8}  \\
 walker    & spin          & 422.1\tiny{±121.4} & 464.9\tiny{±196.4} & 923.2\tiny{±30.1}   & 759.2\tiny{±173.6} & 789.6\tiny{±117.7} \\
 \hline \hline
 walker    & in\_dataset\_avg     & 0.749\tiny{±0.093} & 0.799\tiny{±0.081} & 0.913\tiny{±0.032}  & 0.918\tiny{±0.055} & 0.917\tiny{±0.061} \\
 \hline \hline
 humanoid  & walk          & 3.6\tiny{±5.0}     & 2.3\tiny{±1.1}     & 677.6\tiny{±52.8}   & 779.9\tiny{±37.1}  & 785.0\tiny{±20.4}  \\
 humanoid  & stand         & 5.1\tiny{±1.9}     & 4.8\tiny{±0.8}     & 481.3\tiny{±59.5}   & 750.3\tiny{±43.3}  & 801.7\tiny{±45.8}  \\
 humanoid  & run           & 1.1\tiny{±0.8}     & 1.1\tiny{±0.6}     & 256.8\tiny{±22.9}   & 274.4\tiny{±20.4}  & 294.9\tiny{±17.5}  \\
 \hline \hline
 humanoid  & in\_dataset\_avg     & 0.005\tiny{±0.004} & 0.004\tiny{±0.002} & 0.793\tiny{±0.074}  & 0.965\tiny{±0.058} & 1.014\tiny{±0.049} \\
 \hline \hline
 cheetah   & bounce        & 109.6\tiny{±83.9}  & 315.4\tiny{±102.8} & 436.3\tiny{±47.1}   & 469.3\tiny{±39.7}  & 502.5\tiny{±19.2}  \\
 cheetah   & march         & 135.8\tiny{±182.5} & 257.8\tiny{±110.2} & 766.9\tiny{±92.7}   & 892.6\tiny{±35.0}  & 917.4\tiny{±13.4}  \\
 cheetah   & stand         & 206.6\tiny{±383.7} & 684.7\tiny{±267.4} & 426.9\tiny{±191.6}  & 288.9\tiny{±162.6} & 387.6\tiny{±98.4}  \\
 cheetah   & headstand     & 233.8\tiny{±369.0} & 923.7\tiny{±57.1}  & 488.0\tiny{±345.1}  & 407.0\tiny{±369.8} & 854.5\tiny{±42.0}  \\
 \hline \hline
 cheetah   & out\_dataset\_avg      & 0.214\tiny{±0.308} & 0.684\tiny{±0.166} & 0.660\tiny{±0.207}  & 0.642\tiny{±0.188} & 0.832\tiny{±0.053} \\
 \hline \hline
 quadruped & bounce        & 133.6\tiny{±82.8}  & 248.9\tiny{±41.0}  & 204.3\tiny{±61.0}   & 246.6\tiny{±31.3}  & 231.8\tiny{±18.4}  \\
 quadruped & skip          & 478.9\tiny{±113.2} & 601.0\tiny{±72.3}  & 554.0\tiny{±1.3}    & 722.7\tiny{±112.4} & 836.1\tiny{±104.4} \\
 quadruped & march         & 280.8\tiny{±73.1}  & 517.2\tiny{±108.0} & 478.8\tiny{±41.8}   & 802.1\tiny{±55.9}  & 860.2\tiny{±26.7}  \\
 quadruped & trot          & 178.3\tiny{±38.0}  & 381.5\tiny{±82.6}  & 412.1\tiny{±48.6}   & 605.3\tiny{±16.9}  & 615.9\tiny{±13.6}  \\
 \hline \hline
 quadruped & out\_dataset\_avg      & 0.524\tiny{±0.175} & 0.897\tiny{±0.159} & 0.834\tiny{±0.105}  & 1.187\tiny{±0.105} & 1.245\tiny{±0.075} \\
 \hline \hline
 walker    & flip          & 630.3\tiny{±111.7} & 744.4\tiny{±105.4} & 771.8\tiny{±104.5}  & 891.2\tiny{±24.9}  & 896.5\tiny{±41.2}  \\
 walker    & march         & 709.4\tiny{±182.1} & 744.6\tiny{±183.3} & 593.9\tiny{±12.6}   & 788.3\tiny{±63.7}  & 797.2\tiny{±36.8}  \\
 walker    & skyreach      & 321.7\tiny{±130.0} & 423.0\tiny{±70.3}  & 392.7\tiny{±15.1}   & 364.3\tiny{±48.0}  & 389.8\tiny{±6.5}   \\
 walker    & pullup        & 114.7\tiny{±98.5}  & 101.3\tiny{±104.1} & 33.9\tiny{±8.2}     & 309.7\tiny{±145.5} & 376.5\tiny{±219.5} \\
 \hline \hline
 walker    & out\_dataset\_avg      & 0.645\tiny{±0.196} & 0.747\tiny{±0.163} & 0.674\tiny{±0.052}  & 0.849\tiny{±0.101} & 0.889\tiny{±0.101} \\
 \hline \hline
 humanoid  & dive          & 159.2\tiny{±12.2}  & 166.6\tiny{±29.5}  & 357.1\tiny{±36.6}   & 387.9\tiny{±39.8}  & 404.0\tiny{±5.7}   \\
 humanoid  & march         & 2.9\tiny{±3.5}     & 2.9\tiny{±1.8}     & 479.6\tiny{±48.5}   & 623.1\tiny{±64.6}  & 645.9\tiny{±49.7}  \\
 humanoid  & skip          & 2.0\tiny{±1.2}     & 1.7\tiny{±0.4}     & 81.3\tiny{±26.7}    & 254.4\tiny{±23.8}  & 250.1\tiny{±24.0}  \\
 \hline \hline
 humanoid  & out\_dataset\_avg      & 0.112\tiny{±0.011} & 0.117\tiny{±0.021} & 0.536\tiny{±0.072}  & 0.788\tiny{±0.079} & 0.805\tiny{±0.049} \\
\hline
\end{tabular}
}
\caption{Ablations regarding the various components from
Section~\ref{sec:aw}: advantage weighting, using the average versus the
min of the two target networks for representing uncertainty, using
fully parallel architectures for the two target networks, and using
improved weighted importance sampling (IWIS) versus ordinary WIS.
As described in the text, FB-AW (right column) has advantage weighting, uses the
average instead of the min, has fully parallel architectures, and uses
IWIS. Vanilla FB (left column) has the opposite settings. We compare other
combinations in between.
We report performance on the mixed objective datasets from MOOD, on both
in-dataset and out-of-dataset tasks. The representation dimension is $d=50$ for Cheetah, Quadruped, Walker,
and $d=100$ for Humanoid.
}
\label{appB:tab:full_bench_res}
\end{table}

\end{document}